%%%%%%%%%%%%%%%%%%%%%%%%%%%%%%%%%%%%%%%%%%%%%%%%%%%%%%%%%%%%%%%%%%%%%%%%%%%%%%%%
%2345678901234567890123456789012345678901234567890123456789012345678901234567890
%        1         2         3         4         5         6         7         8

\documentclass[letterpaper, 10 pt, conference]{ieeeconf}  % Comment this line out if you need a4paper

\IEEEoverridecommandlockouts                              % This command is only needed if 
                                                          % you want to use the \thanks command

\overrideIEEEmargins                                      % Needed to meet printer requirements.

%In case you encounter the following error:
%Error 1010 The PDF file may be corrupt (unable to open PDF file) OR
%Error 1000 An error occurred while parsing a contents stream. Unable to analyze the PDF file.
%This is a known problem with pdfLaTeX conversion filter. The file cannot be opened with acrobat reader
%Please use one of the alternatives below to circumvent this error by uncommenting one or the other
%\pdfobjcompresslevel=0
%\pdfminorversion=4

% See the \addtolength command later in the file to balance the column lengths
% on the last page of the document

% The following packages can be found on http:\\www.ctan.org
%\usepackage{graphics} % for pdf, bitmapped graphics files
%\usepackage{epsfig} % for postscript graphics files
%\usepackage{mathptmx} % assumes new font selection scheme installed
%\usepackage{times} % assumes new font selection scheme installed
 
\usepackage{amsmath,amsthm,amssymb}
\usepackage{siunitx}
\usepackage{biblatex}
\usepackage{bm}
\usepackage{hyperref}
\addbibresource{biblio.bib}
\usepackage{xcolor} %for colored text
\usepackage{algorithmicx} % for pseudocode
\usepackage{algorithm}% http://ctan.org/pkg/algorithms
\usepackage[noend]{algpseudocode}% http://ctan.org/pkg/algorithmicx
\usepackage{mathtools}
\usepackage{hhline}

\usepackage{subcaption} %subfigure
\usepackage{graphicx}
\usepackage{caption}

% Map pseudocode Required to Input 
\algrenewcommand\algorithmicrequire{\textbf{Input:}}
% Map pseudocode Ensure to Output
\algrenewcommand\algorithmicensure{\textbf{Output:}}

\usepackage{amsthm}
\newtheorem{theorem}{Theorem}
\newtheorem{corollary}{Corollary}[theorem]
\newtheorem{lemma}{Lemma}

% For annotating

\newtheorem{problem}{Problem}
\newtheorem{assumption}{Assumption}

\title{\LARGE \bf
In the Wild Ungraspable Object Picking with\\ Bimanual Nonprehensile Manipulation
}

\author{Albert Wu$^{1,2}$ and Dan Kruse$^{2}$% <-this % stops a space
% \thanks{*This work was not supported by any organization}% <-this % stops a space
\thanks{$^{1}$Stanford University, Stanford, CA 94305, USA
        {\tt\small  amhwu@stanford.edu}}%
\thanks{$^{2}$Toyota Research Institute (TRI), 
        Los Altos, CA 94022, USA
        {\tt\small dan.kruse@tri.global}}%
}

\begin{document}

\maketitle
\thispagestyle{empty}
\pagestyle{empty}

%%%%%%%%%%%%%%%%%%%%%%%%%%%%%%%%%%%%%%%%%%%%%%%%%%%%%%%%%%%%%%%%%%%%%%%%%%%%%%%%
\begin{abstract}
Picking diverse objects in the real world is a fundamental robotics skill. However, many objects in such settings are bulky, heavy, or irregularly shaped, making them ungraspable by conventional end effectors like suction grippers and parallel jaw grippers (PJGs). In this paper, we expand the range of pickable items without hardware modifications using bimanual nonprehensile manipulation. We focus on a grocery shopping scenario, where a bimanual mobile manipulator equipped with a suction gripper and a PJG is tasked with retrieving ungraspable items from tightly packed grocery shelves. From visual observations, our method first identifies optimal grasp points based on force closure and friction constraints. If the grasp points are occluded, a series of nonprehensile nudging motions are performed to clear the obstruction. A bimanual grasp utilizing contacts on the side of the end effectors is then executed to grasp the target item. In our replica grocery store, we achieved a 90\% success rate over 102 trials in uncluttered scenes, and a 67\% success rate over 45 trials in cluttered scenes. We also deployed our system to a real-world grocery store and successfully picked previously unseen items. Our results highlight the potential of bimanual nonprehensile manipulation for in-the-wild robotic picking tasks. A video summarizing this work can be found at \url{youtu.be/g0hOrDuK8jM}.
\vspace{-7pt}
\end{abstract}

%%%%%%%%%%%%%%%%%%%%%%%%%%%%%%%%%%%%%%%%%%%%%%%%%%%%%%%%%%%%%%%%%%%%%%%%%%%%%%%%
\section{Introduction}
% Open world robotic picking is the basis of many applications.
% However, many items are unpickable with specialized gripper
% Items are often cluttered
% Human uses nonprehensile interaction and bimanual manipulation to address these challenges
% These behaviors are scarce in academic robot literature and even more rare in the wild
%   Difficult to plan
%   System engineering challenges
%   To the best of our knowledge, none exist in the wild
% List of our contributions

Robotic picking is a fundamental skill required by many applications. However, real-world objects often present significant challenges due to their size, weight, or irregular shapes. These factors often render specialized end effectors, such as suction grippers and parallel jaw grippers (PJGs), ineffective. Suction grippers may struggle with objects that have large moment arms, while PJGs may fail to grasp items with bulky or irregular geometries. Moreover, many real-world picking scenarios involve cluttered environments, requiring precise interaction with the scene to access the target object. By design, both PJGs and suction grippers rely on a single mode of interaction, which limits their ability to handle in-the-wild picking tasks.

Humans approach these challenges with a combination of bimanual manipulation~\cite{krebs2022bimanual} and nonprehensile interactions~\cite{spiers2017analyzing}. For instance, when retrieving a large box from a cluttered space, a person may first push aside non-target items to create space, then secure the box by clamping it with open hands and supporting it with forearm contact. %This flexibility and adaptability enable humans to handle a wide variety of objects in dynamic environments effectively. 
However, these advanced techniques are underrepresented in the robotics literature due to the complexity involved in planning, system design, and execution. The few works that explore such modes of interaction on hardware (e.g.,\cite{dogar2012planning, zhou2023learning, wu2024one}) are confined to laboratory settings with simplified conditions, such as minimal clutter\cite{zhou2023learning, wu2024one} and few scenarios~\cite{dogar2012planning, zhou2023learning}. In contrast, robotic systems engineered for real-world environments (e.g.,~\cite{mahler2019learning, bajracharya2024demonstrating}) remain grounded in prehensile manipulation with rigid, single-arm grasps. To the best of our knowledge, no in-the-wild systems incorporate nonprehensile manipulation or use multiple arms in coordination.

\begin{figure}[t]
    \begin{subfigure}[b]{0.49\columnwidth}
         \centering
         \includegraphics[width=\textwidth]{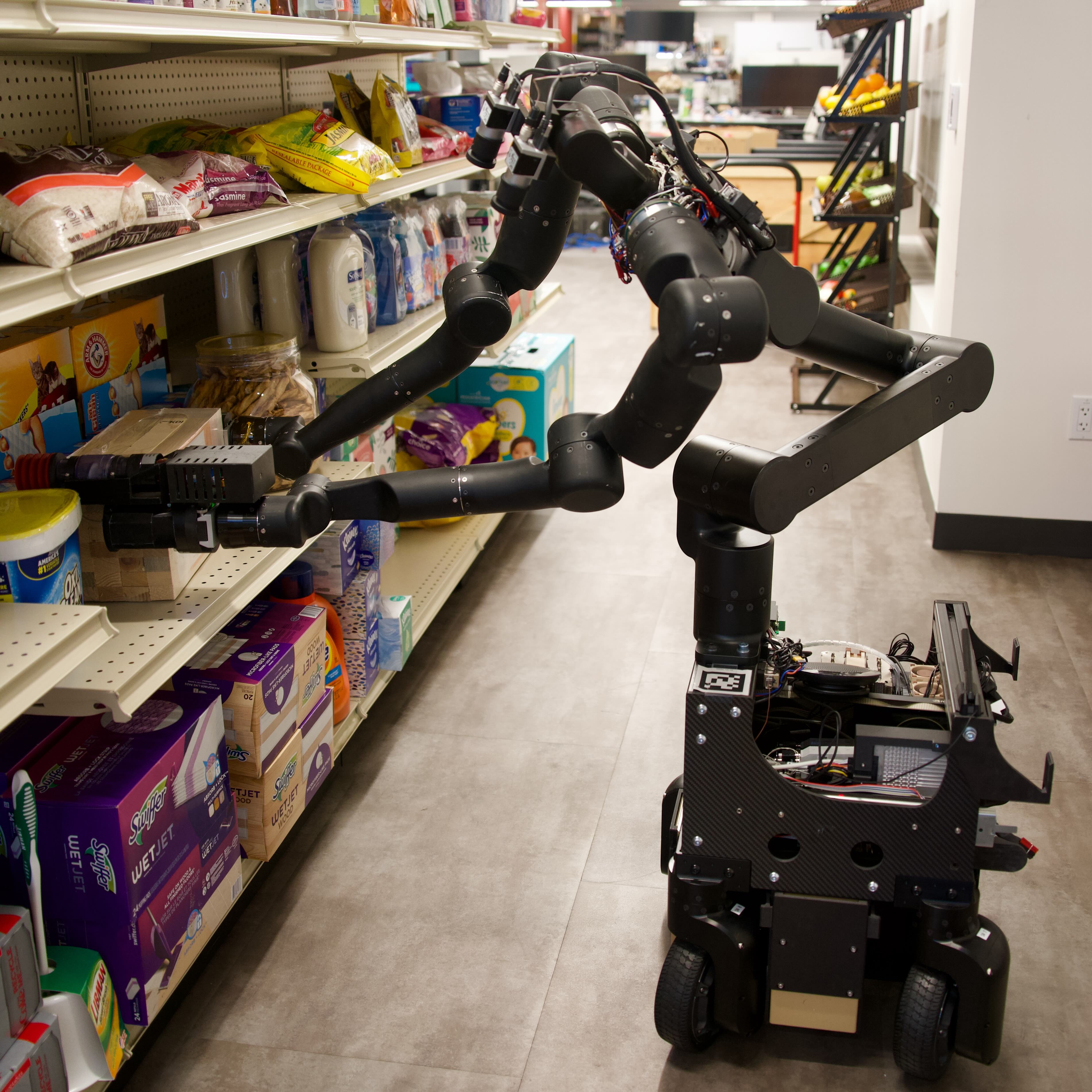}
         \caption{Bottom shelf}
         \label{subfig:bottom_shelf_pick}
     \end{subfigure}
     \hfill
     \begin{subfigure}[b]{0.49\columnwidth}
         \centering
         \includegraphics[width=\textwidth]{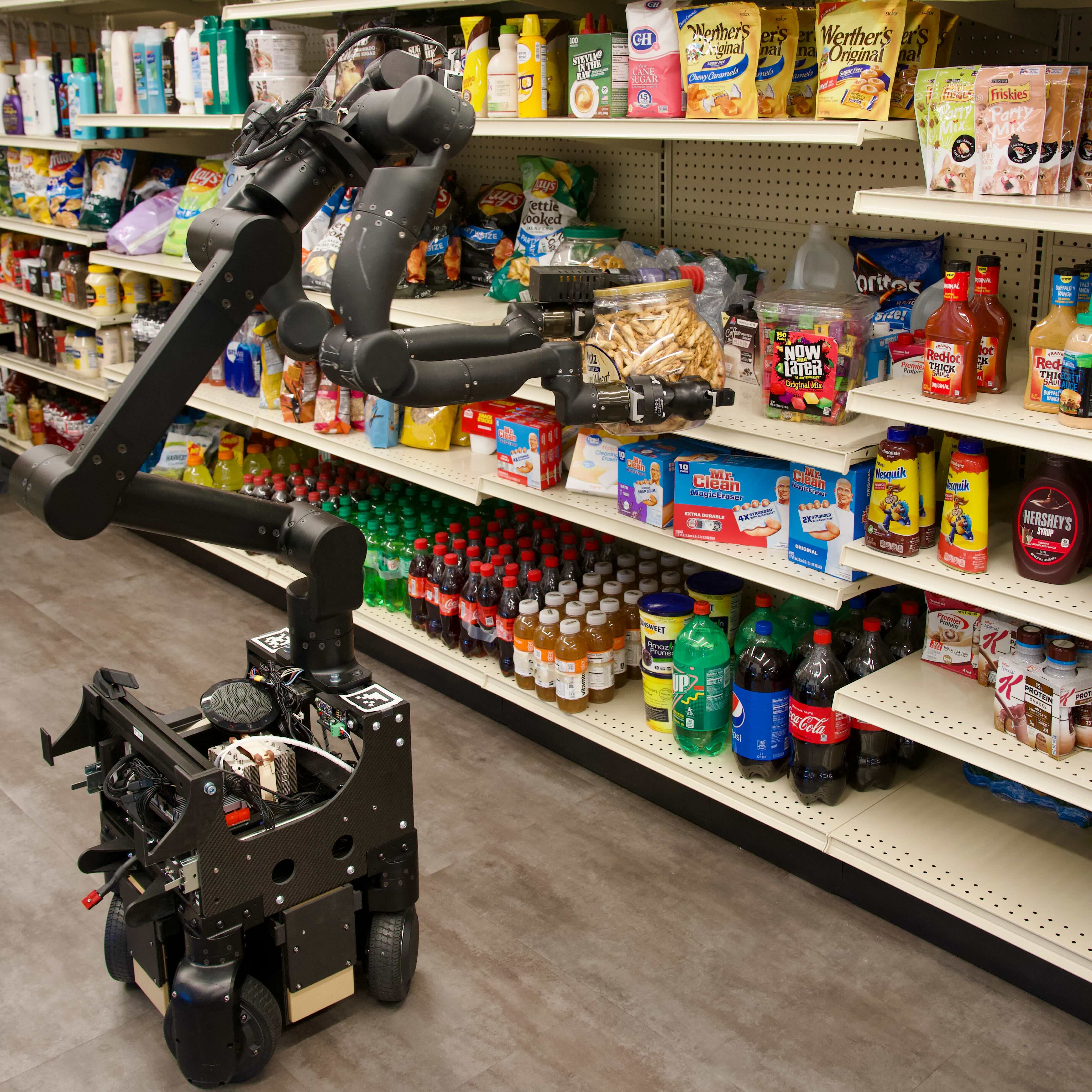}
         \caption{Center shelf}
         \label{subfig:center_shelf_pick}
     \end{subfigure}
     \hfill
    \begin{subfigure}[b]{0.49\columnwidth}
         \centering
         \includegraphics[width=\textwidth]{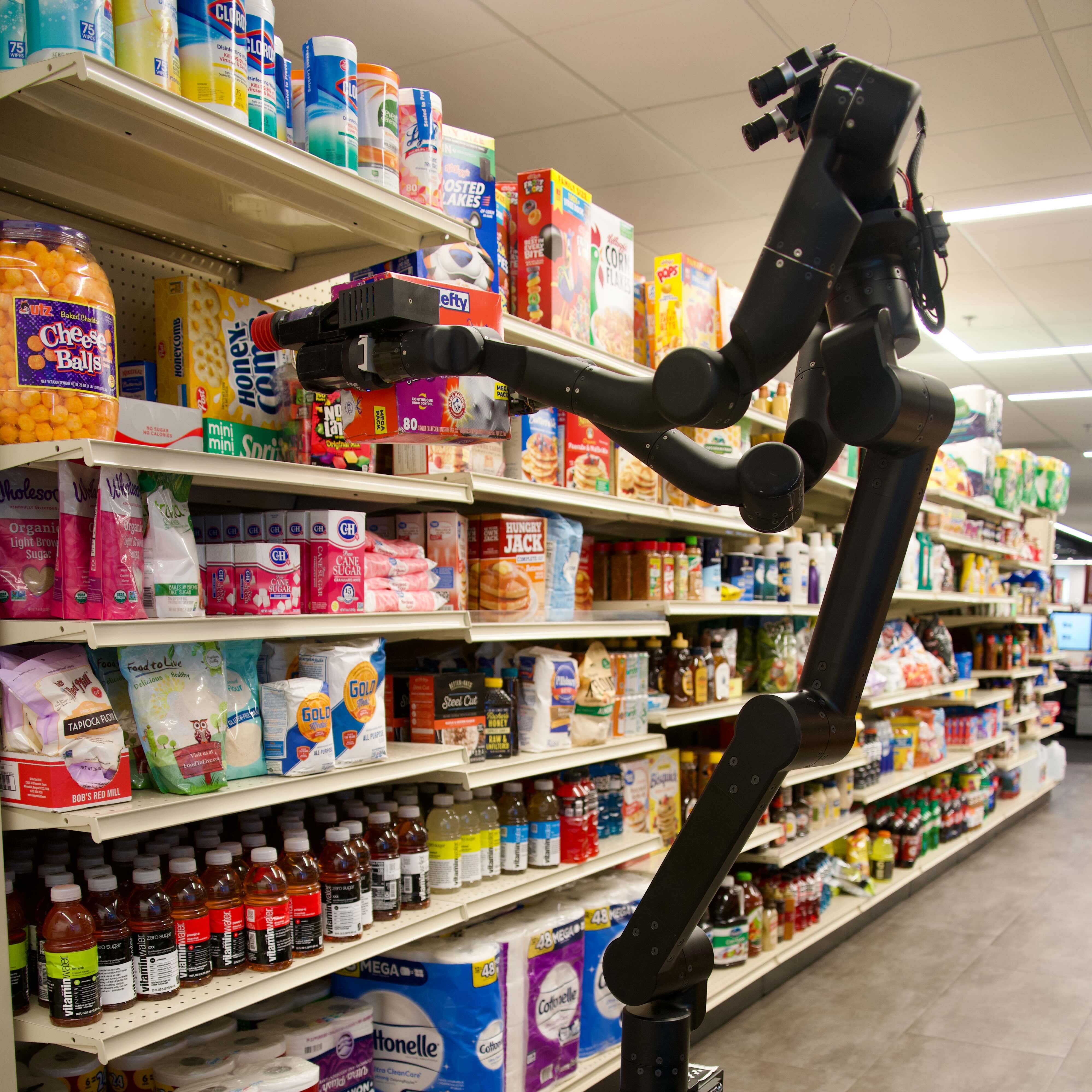}
         \caption{Top shelf}
         \label{subfig:top_shelf_pick}
    \end{subfigure}
    \hfill
    \begin{subfigure}[b]{0.49\columnwidth}
         \centering
         \includegraphics[width=\textwidth]{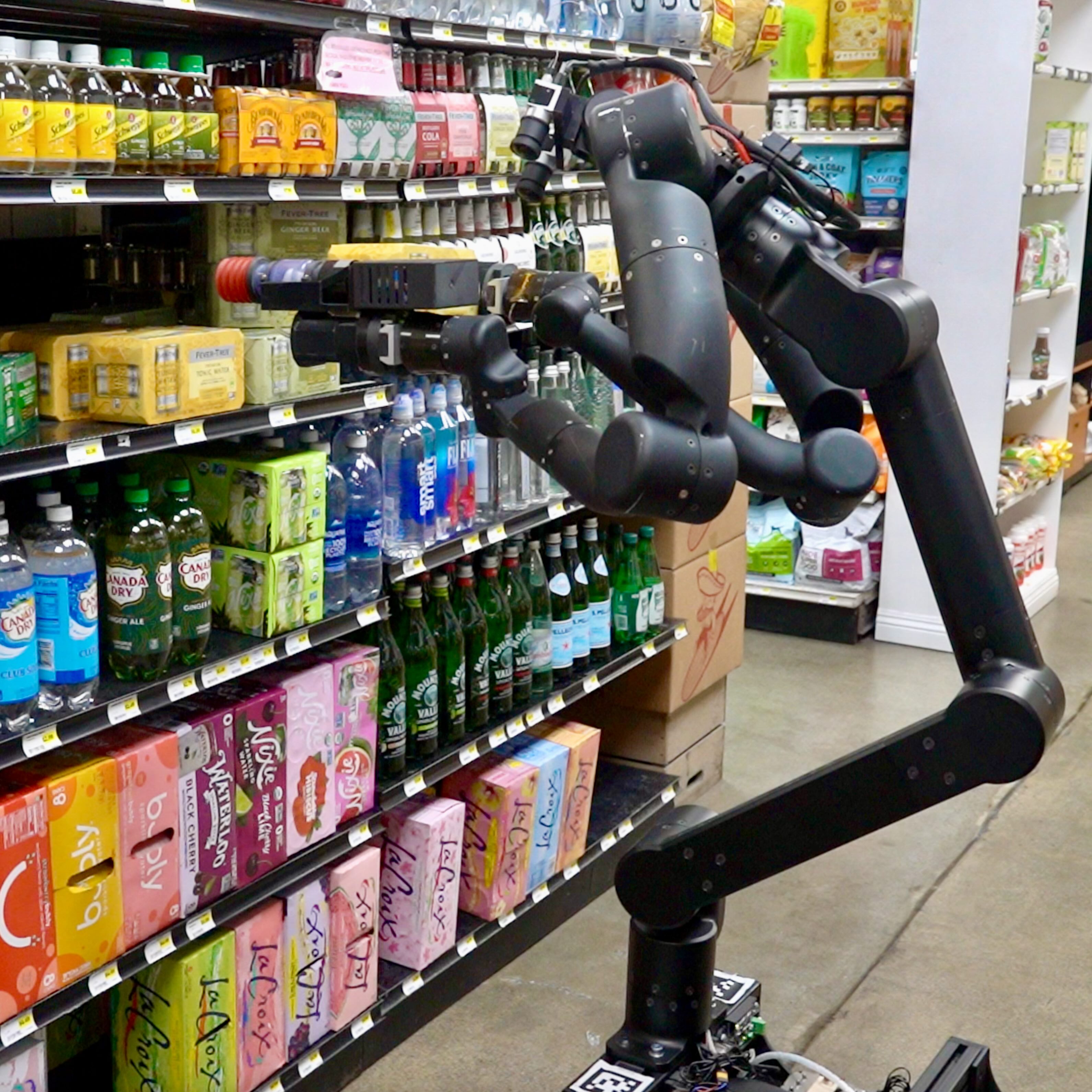}
         \caption{\textit{Real grocery}}
         \label{subfig:real_store_pick}
     \end{subfigure}
    \caption{TTT picking items from clutter at our \textit{mock grocery} replica store (\ref{subfig:bottom_shelf_pick}-\ref{subfig:top_shelf_pick}) and a real-world grocery store(\ref{subfig:real_store_pick}).}
    \label{fig:store_pick}
    \vspace{-20pt}
\end{figure}

In this paper, we tackle the challenge of in-the-wild robotic picking from clutter using bimanual nonprehensile manipulation. Specifically, we address the task of grocery shopping, where the robot must retrieve an item that is ungraspable by either a suction cup or a gripper from a cluttered shelf. We propose a novel strategy that involves decluttering the scene through nonprehensile \textit{nudging}, followed by bimanual grasping using force feedback. Our key contributions are listed below and summarized in the supplementary video\footnote{\url{youtu.be/g0hOrDuK8jM}}.\\
$\bullet\;\,$\textbf{Bimanual grasp planning}: We develop a robust bimanual grasp planning method that computes grasp points based on the target item's geometry. Drawing inspiration from the dexterous grasping literature~\cite{ferrari1992planning, li2023frogger, murray2017mathematical, wu2023learning}, our formulation solves a maximin problem to evaluate grasp quality. We introduce an efficient sampling scheme to approximate the solution to the maximin problem.\\
$\bullet\;\,$\textbf{Nonprehensile decluttering}: We formulate a decluttering strategy that determines how to move items in a cluttered scene to expose the desired grasp points by solving a packing problem. We also propose a set of nonprehensile nudging motion primitives to execute the decluttering plan.\\
$\bullet\;\,$\textbf{Bimanual nonprehensile picking}: We build a bimanual nonprehensile picking system on the TTT mobile manipulator~\cite{bajracharya2024demonstrating}. Given a target item and an arbitrary initialization in a grocery store, our system autonomously retrieve the item without human intervention.\\
$\bullet\;\,$\textbf{Extensive hardware validation}:  In our replica grocery store, we achieved a 90\% success rate over 102 trials in scenes without clutter and a 67\% success rate over 45 trials in cluttered scenes. Additionally, we successfully grasped novel items in real-world grocery store experiments.

% Using robotic grocery shopping as example, we showcase the potential of using nonprehensile manipulation to overcome these challenges without hardware modification.
% We use nonprehensile bimanaul grasping to address bulky items. Our grasp planner evaluates the robust force closure problem to select grasp points online.  
% To address cluttered we leverage nonprehensile "nudging" to clear out the surroundings in preparation for grasping

\section{Related work}
\label{sec:related_work}
\textbf{In-the-wild robotic picking}
% \label{subsec:prehensile_picking}
focuses on the use of PJGs or suction cups due to their simplicity and robustness. The primary challenge in this domain lies in identifying a valid grasp pose, as the grasp is straightforward to establish once the pose is determined. Comprehensive reviews can be found in~\cite{bohg2013data, kleeberger2020survey, xie2023learning, du2021vision}. A major focus in this area is picking from clutter, where the target item is occluded (e.g.~\cite{yu2016summary, correll2016analysis, eppner2016lessons, mahler2017learning, mahler2019learning, kiatos2019robust, sundermeyer2021contact, murray2024learning}). Notable scenarios include \textit{bin picking}, which involves lifting an object vertically from a cluttered bin (e.g.~DexNet~\cite{mahler2017learning, mahler2019learning} and Amazon Picking Challenge~\cite{yu2016summary, correll2016analysis, eppner2016lessons}), and \textit{shelf picking}, which requires extracting an item horizontally from an occluded shelf(e.g.~\cite{murray2024learning}). %Shelf picking poses additional challenges as the top of the item is often occluded, necessitating a side grasp point that experience a much larger torque from the item weight than a top grasp point.
The methods developed in this literature are bounded by the capabilities of PJGs and suction grippers, which struggle to handle heavy, bulky, or irregularly shaped objects—items frequently encountered in real-world environments.

\textbf{Bimanual manipulation}
increases system versatility with additional hardware and has been explored extensively in the literature~\cite{smith2012dual, siciliano2012advanced}. In industrial picking, bimanual systems are employed to introduce heterogeneous end effectors and expand the range of pickable items~\cite{mahler2019learning, bajracharya2024demonstrating}. However, these systems rarely involve coordinated use of both arms, limiting their capabilities to the sum of the individual arms. Recently, bimanual manipulation has gained attention in robot learning, where anthropomorphic robots are used for behavior cloning from human demonstrations~\cite{chi2024universal, fu2024mobile, wang2024dexcap, yang2024equivact, gao2024bi}. This approach enables sophisticated manipulation impossible with single-arm systems. Nevertheless, the lack of generalization in learned policies restricts these studies to controlled laboratory settings.
% % Systems heavy. Allows heterogeneous end effectors
% Bimanual manipulation has long existed in literature as a way to increase system versatility with additional hardware. We point the readers to~\cite{smith2012dual, siciliano2012advanced} for a general survey. In the industrial picking context, bimanual systems are often used to introduce heterogeneous end effectors and expand the set of pickable items~\cite{mahler2019learning, bajracharya2024demonstrating}. However, to avoid challenging bimanual planning problems, the arms are seldom used in coordination. As such, the capability of these systems is merely a union of the individual arms.

% % Recent surge in interest due to learning. Limited to lab setting
% Recently, bimanual manipulation has also attracted great interest from the robot learning community, where the anthropomorphic morphology serves as a natural platform for behavior cloning from human demonstration~\cite{chi2024universal, fu2024mobile, wang2024dexcap, yang2024equivact, gao2024bi}. These work have a strong focus on using the arms in synergy, and have achieved extremely sophisticated manipulation tasks impossible with one arm. Nevertheless, the lack of generalizability inherent to these learned policies restricts this literature to laboratory settings. 

\textbf{Nonprehensile manipulation}
% Refer to review paper
% Nonprehensile manipulation has long been investigated as a way to expand the system capability by introducing novel modes of interactions. We limit our discussion on using nonprehensile manipulation for picking~\cite{dogar2012planning, zhou2023learning, chen2023synthesizing, imtiaz2023prehensile, wu2024one}, and refer to~\cite{mason1999progress, ruggiero2018nonprehensile} for a more general review. ~\cite{zhou2023learning, imtiaz2023prehensile} are restricted to a single environment, while ~\cite{chen2023synthesizing, wu2024one, zhou2023learning} does not consider non-target items in the scene.~\cite{dogar2012planning,imtiaz2023prehensile} considers picking from clutter. ~\cite{imtiaz2023prehensile} focuses on an industrial bin-picking setup but has no hardware results.~\cite{dogar2012planning} is the most similar to our work, with a single arm mobile manipulator picking from multiple scenes, including a cluttered shelf. While~\cite{dogar2012planning} uses nonprehensile push on bulky ungraspable items, the picked item is still restricted to item that fit in the gripper. None of the aforementioned nonprehensile picking work present hardware results beyond laboratory settings.
extends robotic capabilities by introducing new interaction modes. We focus on its application in picking~\cite{dogar2012planning, zhou2023learning, chen2023synthesizing, imtiaz2023prehensile, wu2024one} and refer readers to ~\cite{mason1999progress, ruggiero2018nonprehensile} for a broader review. 
~\cite{zhou2023learning, imtiaz2023prehensile} are restricted to a single environment, while ~\cite{chen2023synthesizing, wu2024one, zhou2023learning} does not consider non-target items in the scene. Work on cluttered environments includes~\cite{dogar2012planning, imtiaz2023prehensile}, though~\cite{imtiaz2023prehensile} lacks hardware validation. The paper most relevant to our work is~\cite{dogar2012planning}, which uses nonprehensile pushing with a single-arm mobile manipulator to push bulky, ungraspable items. Nonetheless, ~\cite{dogar2012planning} is still limited to grasping objects that fit within the gripper and has not been tested beyond laboratory settings.

\section{Robot system}
\label{sec:robot_system}
We leverage the bimanual mobile manipulation platform ``TTT''~\cite{bajracharya2024demonstrating} and summarize its key attributes in this section.\\
\textbf{Hardware:}
TTT consists of a pseudo-holonomic 4-wheeled chassis with independent steering control, a 5-DoF torso, two 7-DoF arms, and a 2-DoF pan-tilt head.  The arms have heterogeneous tooling with the left arm using a custom suction tool based on the Robotiq EPick and the right arm has an off-the-shelf PJG: the Robotiq Hand-E.\\%Nearly all components are custom designed from a low level for this platform, allowing for large manipulation capabilities on a relatively small footprint mobile base.
%The mobile manipulator TTT consists of a pseudo-holonomic 4 wheeled chassis with each wheel's steering direction independently controlled, a 5-DoF torso, two 7-DoF arms, and a 2-DoF pan-tilt sensor head.  The arms have heterogeneous tooling with the left arm using a custom suction tool based on the Robotiq EPick and the right arm has an off-the-shelf PJG: the Robotiq Hand-E. Nearly all components are custom designed from a low level for this platform, allowing for large manipulation capabilities on a relatively small footprint mobile base.
\textbf{Sensing:} 
The robot primarily relies on a stereo pair of Basler color cameras with wide-angle lenses mounted on the pan-tilt sensor head for perception.  Please see~\cite{bajracharya2024demonstrating} for further details on how point clouds are generated and how we segment individual items.  Additionally, each arm has a 6-DoF force/torque (F/T) sensor on the wrist between the arm tip and the tool for detecting wrenches at the end effector.\\
\textbf{Planning and control:}
For planning paths to task-space poses, we use a dynamic roadmap (DRM)~\cite{van2005roadmap} that utilizes measured voxels for collision avoidance. For local task-space movements, we employ hybrid position/force control~\cite{raibert1981hybrid} with feedback from the wrist F/T sensor.
%We generate a reference trajectory corresponding to its control type for each motion axis. In the force axes, this trajectory is dynamically updated based on currently measured end effector wrenches using proportional-derivative feedback. 
Task-space trajectories are mapped to joint space with optimization-based inverse kinematics (IK). This minimizes deviation from the desired path, subject to constraints that prevent self-collision, exceeding actuator limits, and toppling the robot.
%We map desired local cartesian trajectories to desired joint motion by using optimization-based inverse kinematics (IK).  The QP problem is formulated such that it minimizes deviation from the desired path subject to constraints that prevent self-collision, exceeding acuatator dynamic limits, and the center of mass from moving past the tipping point of the robot.
% For planning to cartesian goals, we use a dynamic roadmap (DRM) that takes into account locally measured voxels for collision avoidance.

% For local cartesian control behaviors, we employ a hybrid position/force control~\cite{raibert1981hybrid}.  Each motion axis is decomposed into its mode of operation (e.g. position vs admittance control) and we generate a desired trajectory for each tool axis to track relative to some fixed axis.  In the admittance axes, this trajectory is dynamically updated based on currently measured end effector wrenches projected into the relevant axes, and employ proportional-derivative feedback for calculating a desired reaction velocity.

% We map desired local cartesian trajectories to desired joint motion by using optimization-based inverse kinematics (IK).  The QP problem is formulated such that it minimizes deviation from the desired path subject to constraints that prevent self-collision, exceeding acuatator dynamic limits, and the center of mass from moving past the tipping point of the robot.

\section{``Shopping for ungraspable grocery'' task}
\label{sec:task_setup}

% \subsection{``Shopping for ungraspable grocery items'' scenario}
% We consider the task of picking grocery items as a concrete, in-the-wild scenario where the robot encounters diverse ungraspable objects. The target item is packed on a shelf with limited clearance, but the aisle-facing side is fully exposed (Fig.~\ref{subfig:item_detection}).
% %As such, only one side of the item can be reliably observed and accessed. 
% %Given a specific item, 
% The robot must drive to the item's general location, find it, and retrieve it from the shelf. %This study focuses on item picking, in particular items that cannot be grasped by a suction cup or a parallel jaw gripper.
We consider the task of picking ungraspable grocery items as a concrete, in-the-wild scenario where the robot encounters diverse ungraspable objects. The target item is packed on a shelf with limited clearance, but the aisle-facing side is fully exposed (Fig.~\ref{subfig:item_detection}).
%As such, only one side of the item can be reliably observed and accessed. 
%Given a specific item, 
The robot must drive to the item's general location, find it, and retrieve it from the shelf. %This study focuses on item picking, in particular items that cannot be grasped by a suction cup or a parallel jaw gripper.

We leverage the software stack from~\cite{bajracharya2024demonstrating} for driving and perception. Our work assumes the robot is near the target item with a segmented point cloud of the scene available. While the robot has a general map indicating the shelf locations, the exact position of the target item is determined online through perception. The specific problem setup and inputs from the stack are detailed in Prob.~\ref{prob:nonprehensile_picking}.

\begin{problem}[Nonprehensile bimanual picking]
    Given a segmented point cloud of the scene consisting of the target item $\bm{P}_t$, adjacent items $\bm{P}_a$, and grocery shelf $\bm{P}_s$, find and perform a bimanual motion plan %$\bm{\pi}(s) = [\bm{\pi}_l(s), \bm{\pi}_r(s)]^T$ 
    to retrieve the iteml.
    \label{prob:nonprehensile_picking}
\end{problem}

In this paper, we use the frame definitions shown in Fig.~\ref{fig:frame_definitions}. Unless otherwise noted, all vector quantities are in the \textit{item frame} $I$, which is centered on the target item with $+x$ pointing into the shelf, $+y$ pointing to the right, and $+z$ pointing up. For a pair of grasp points $(\bm{c}_l, \bm{c}_r)$, the \textit{contact frames} $C_l$ and $C_r$ are centered at $(\bm{c}_l, \bm{c}_r)$, with $+\prescript{C}{}{x}$ pointing along the contact normal and $\prescript{C}{}{z}$ aligned with $\prescript{I}{}{z}$. The robot frame $R$, fixed to the robot chassis and effectively stationary in the world, is used in declutter planning (Sec.\ref{sec:declutter_planning}) to account for expected movements of the target item.
%The projection of a point cloud $\bm{P}$ onto the $yz$-plane of $I$ is thus $\Pi_{yz} \bm{P}$. 

%\aw{Vector quantities are lowercase bold. Scalar quantities are lowercase italic. Frames are uppercase italic. Matrices and sets are uppercase bold.}

\begin{figure}[ht]
    \vspace{-5pt}
     % \centering
     % \includegraphics[width=0.6\columnwidth]{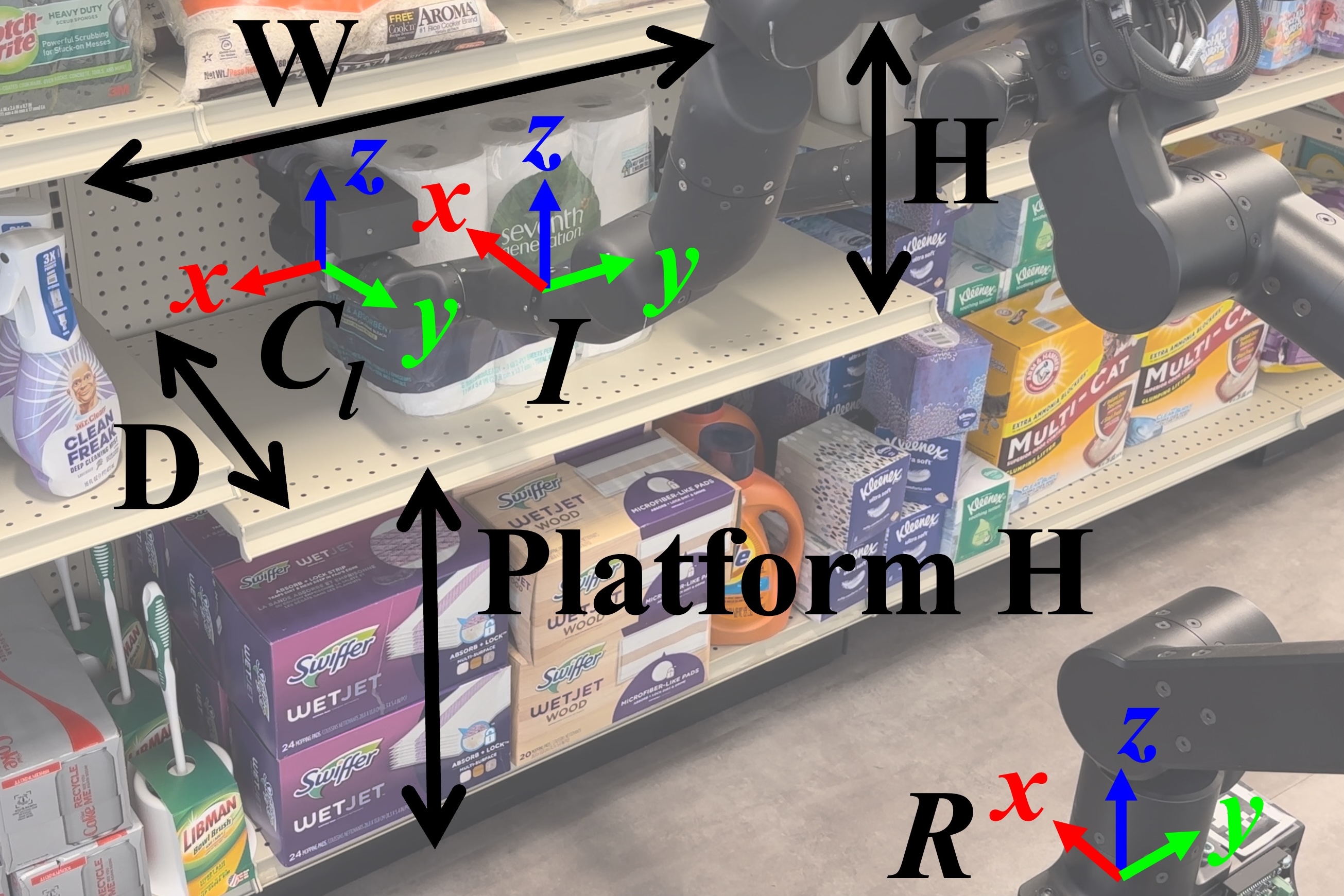}
     % \label{fig:frame_definitions}
     \begin{subfigure}[b]{0.32\columnwidth}
         \centering
         \includegraphics[width=\textwidth]{figures/frame_definition.jpg}
         \caption{Definitions}
         \label{fig:frame_definitions}
     \end{subfigure}
     \hfill
      \begin{subfigure}[b]{0.32\columnwidth}
         \centering
         \includegraphics[width=\textwidth]{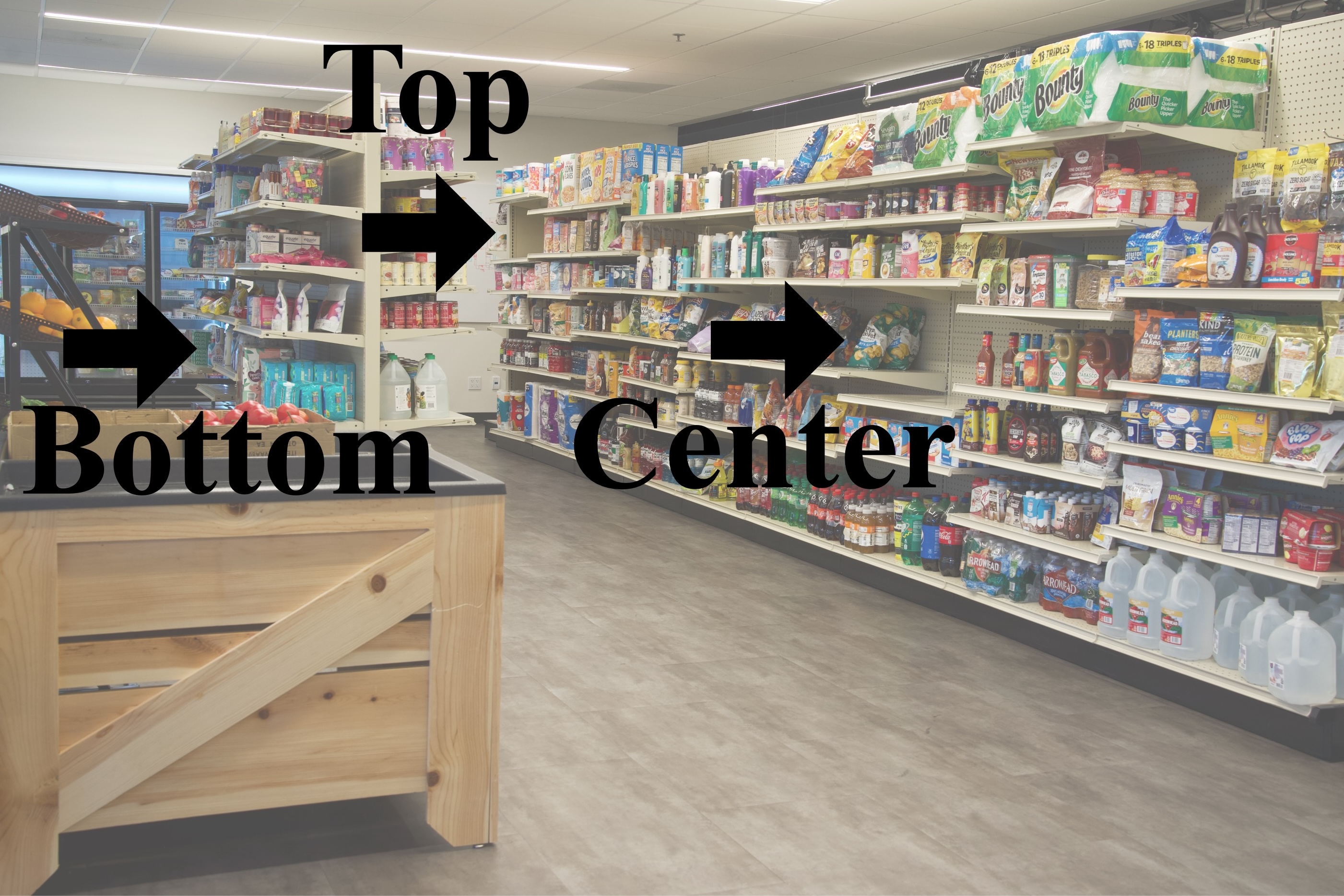}
         \caption{\textit{Mock grocery}}
         \label{subfig:mock_grocery}
     \end{subfigure}
     \hfill
    \begin{subfigure}[b]{0.32\columnwidth}
         \centering
         \includegraphics[width=\textwidth]{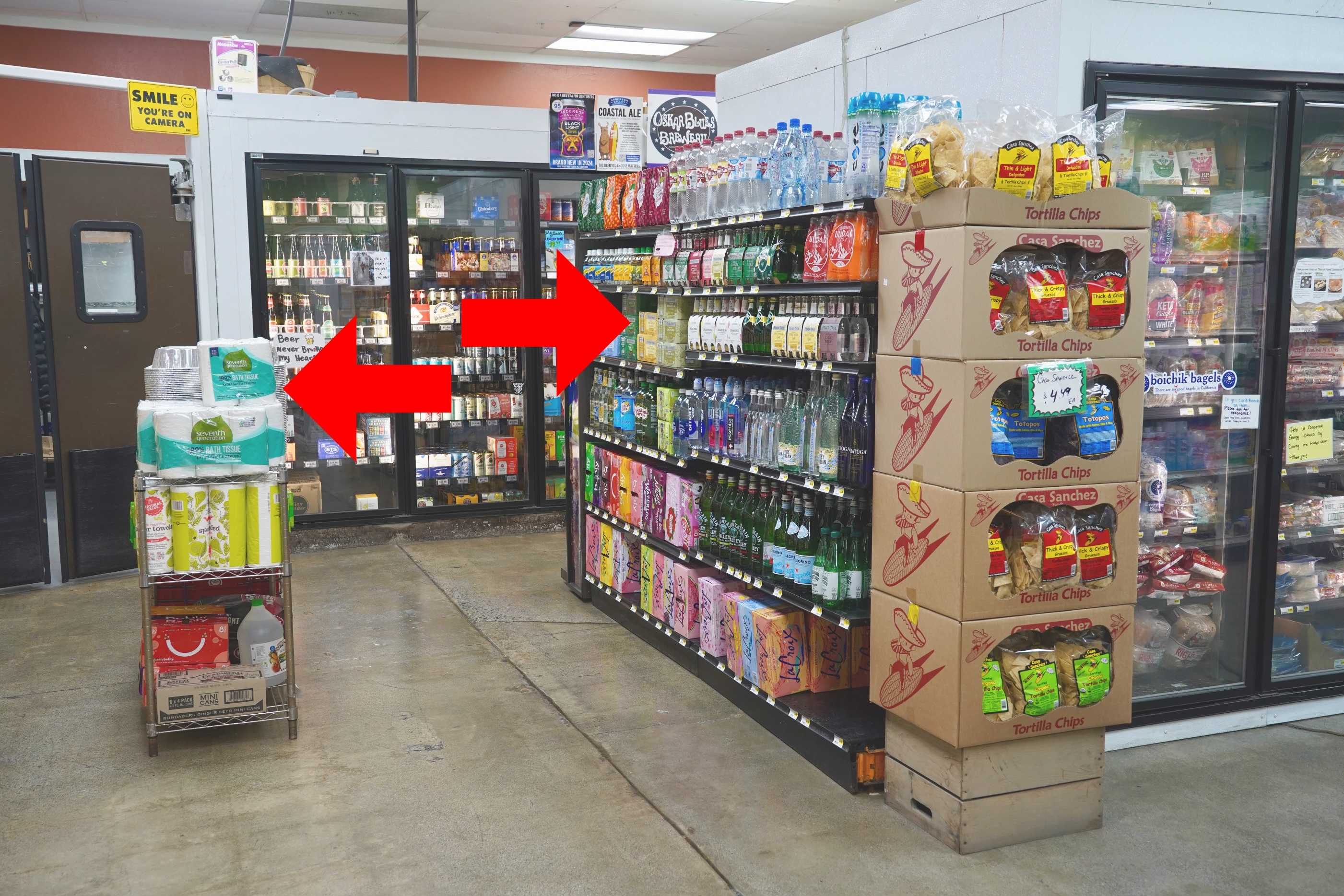}
         \caption{\textit{Real grocery}}
         \label{subfig:real_grocery}
     \end{subfigure}
    %  \hfill
    % \begin{subfigure}[b]{0.48\columnwidth}
    %      \centering
    %      \includegraphics[width=\textwidth]{figures/.jpg}
    %      \caption{One aisle of \textit{Real grocery} with 2 test items marked.}
    %      \label{subfig:robot}
    %  \end{subfigure}
     \caption{~\ref{fig:frame_definitions}: Frame and shelf dimension definitions.~\ref{subfig:mock_grocery},~\ref{subfig:real_grocery}: Test environments. Red arrows in~\ref{subfig:real_grocery}: target items.}
    \vspace{-10pt}
\end{figure}

\section{Nonprehensile bimanual picking}
\label{sec:nonprehensile_picking}
% \subsection{Overview}
We address Problem~\ref{prob:nonprehensile_picking} using two types of motions:  \textbf{decluttering}, where the tips of the end effectors move objects with nonprehensile pushing to create free space in the scene, and \textbf{grasping}, where the end effectors' sides are used to grasp the target item. Our pipeline begins with scene observation and potential grasp point identification, followed by computing a decluttering strategy if the grasp points are occluded. We rank and select the most promising declutter-grasp plans. If decluttering is needed, a hand-designed \textit{nudging} motion sequence is employed, and new visual observations are taken to update the plan. If the grasp points are exposed or decluttering attempts are exhausted, the robot attempts to grasp the item at the planned points. This process is summarized in Alg.~\ref{alg:run_pick}, with detailed discussions provided in the following subsections.

% On the other hand, under the assumption that the objects do not overlap each other, the necessary decluttering is decided by the desired grasp point selection.

\begin{algorithm}
    \caption{\texttt{run\_pick}$(t,i)$}
    \begin{algorithmic}[1]
    % \Require{$t, i$}
    % \Comment{target item, declutter count}
    % \State{$\mathcal{C}, \mathcal{D}\gets \emptyset$}
    \State{$\bm{P}_t, \bm{P}_a, \bm{P}_s\gets$ \texttt{run\_perception()}}
    % First sample grasp points
    \State{$\mathcal{C}\gets$\texttt{plan\_grasps}$\left(\Pi_{yz}\bm{P}_t\right)$}
    % Compute the nudges needed for grasp points
    \For{$(\bm{c}_l, \bm{c}_r)\in\mathcal{C}$}
    \State{$\mathcal{D}$.\texttt{add}$\bigl($\texttt{plan\_declutter} $(\bm{c}_l,\bm{c}_r, \bm{P}_t, \bm{P}_a, \bm{P}_s)$$\bigr)$}
    \EndFor
    % Sort the most promising pairs
    \State{$(\mathcal{C}, \mathcal{D})\gets$\texttt{rank\_declutter\_grasp}($\mathcal{C}, \mathcal{D}$)}\label{alg:line:ranking}
    \State{$(\bm{c}_l,\bm{c}_r,\bm{d}) \gets (\mathcal{C}, \mathcal{N})$\texttt{.pop\_first}()}
        \If{$\bm{d}\neq$ no-op $\land \; i\leq i_{\textrm{max}}$}
            \State{\texttt{run\_nudge}($\bm{d}$)}
            \State{\textbf{return} \texttt{run\_pick}($t, i+1$)}
        % \ElsIf{$(c,n)$ is collision free \aw{todo}} \label{alg:line:collision_free}
        \EndIf
        \State{\textbf{return} \texttt{run\_grasp}($\bm{c}_l,\bm{c}_r$)}
    % \State{\textbf{break}}
    % \State{\textbf{break}}
    % \EndFor
    % Perform the nudge
    % Sort the grasps by
    % 1. No nudge: grasp cost
    % 2. Has nudge: nudge cost
    % Solve the kinematics for these plans
    % Execute plans
    % If performed nudge, call pick_item again
    \end{algorithmic}
    \label{alg:run_pick}
\end{algorithm}

\begin{figure}[ht]
    \vspace{5pt}
    \begin{subfigure}[b]{0.24\columnwidth}
         \centering
         \includegraphics[width=\textwidth]{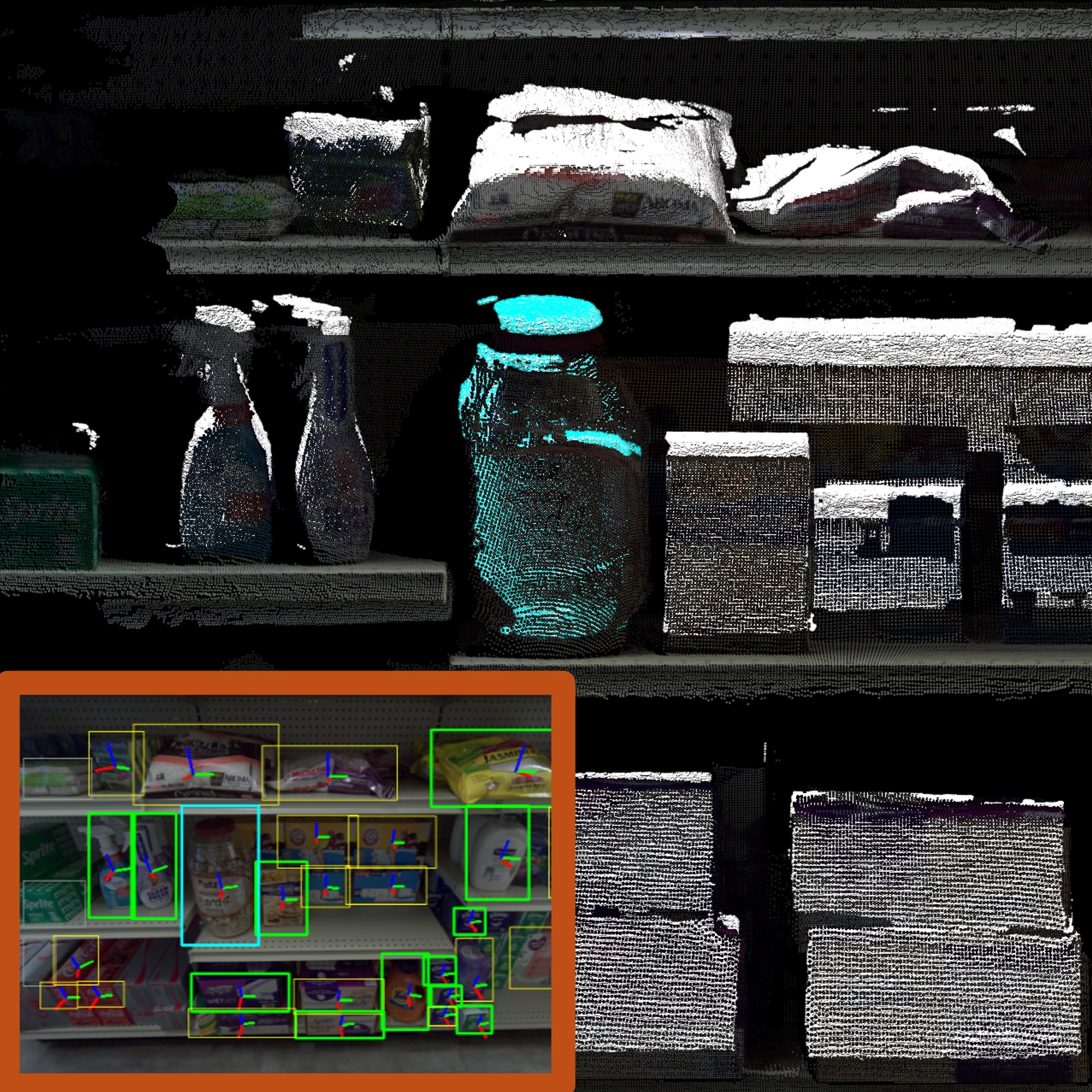}
         \caption{}
         \label{subfig:item_detection}
     \end{subfigure}
     \hfill
     \begin{subfigure}[b]{0.24\columnwidth}
         \centering
         \includegraphics[width=\textwidth]{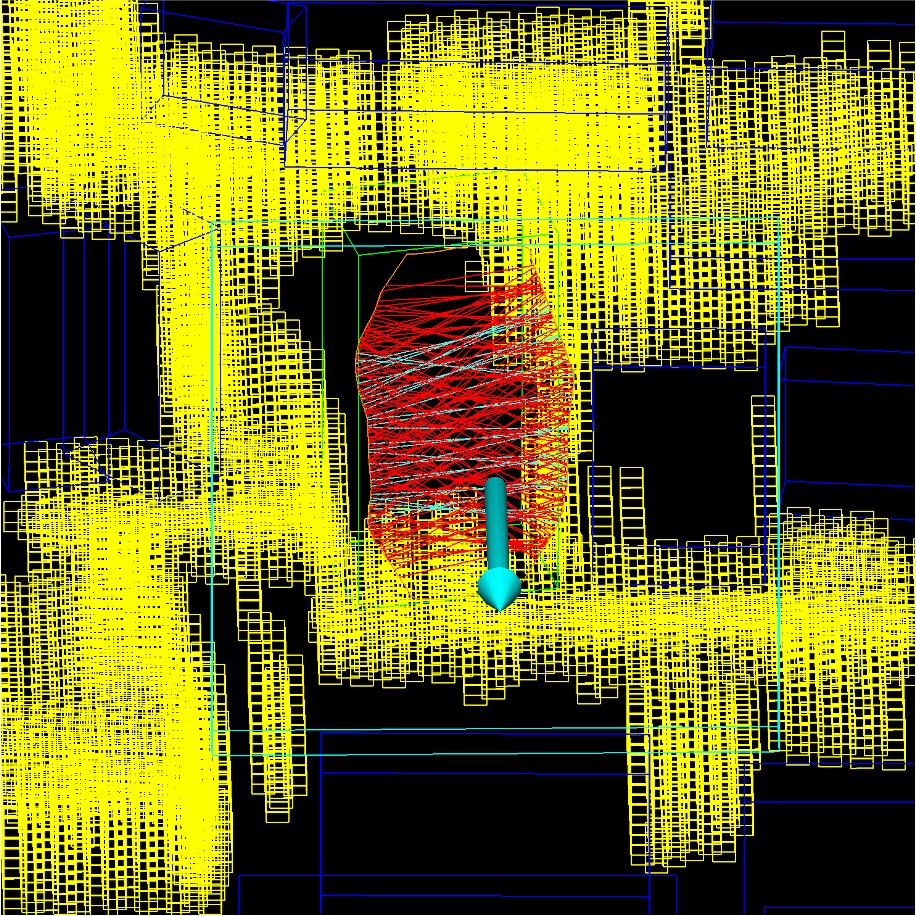}
         \caption{}
         \label{subfig:grasp_planning}
     \end{subfigure}
     \hfill
    \begin{subfigure}[b]{0.24\columnwidth}
         \centering
         \includegraphics[width=\textwidth]{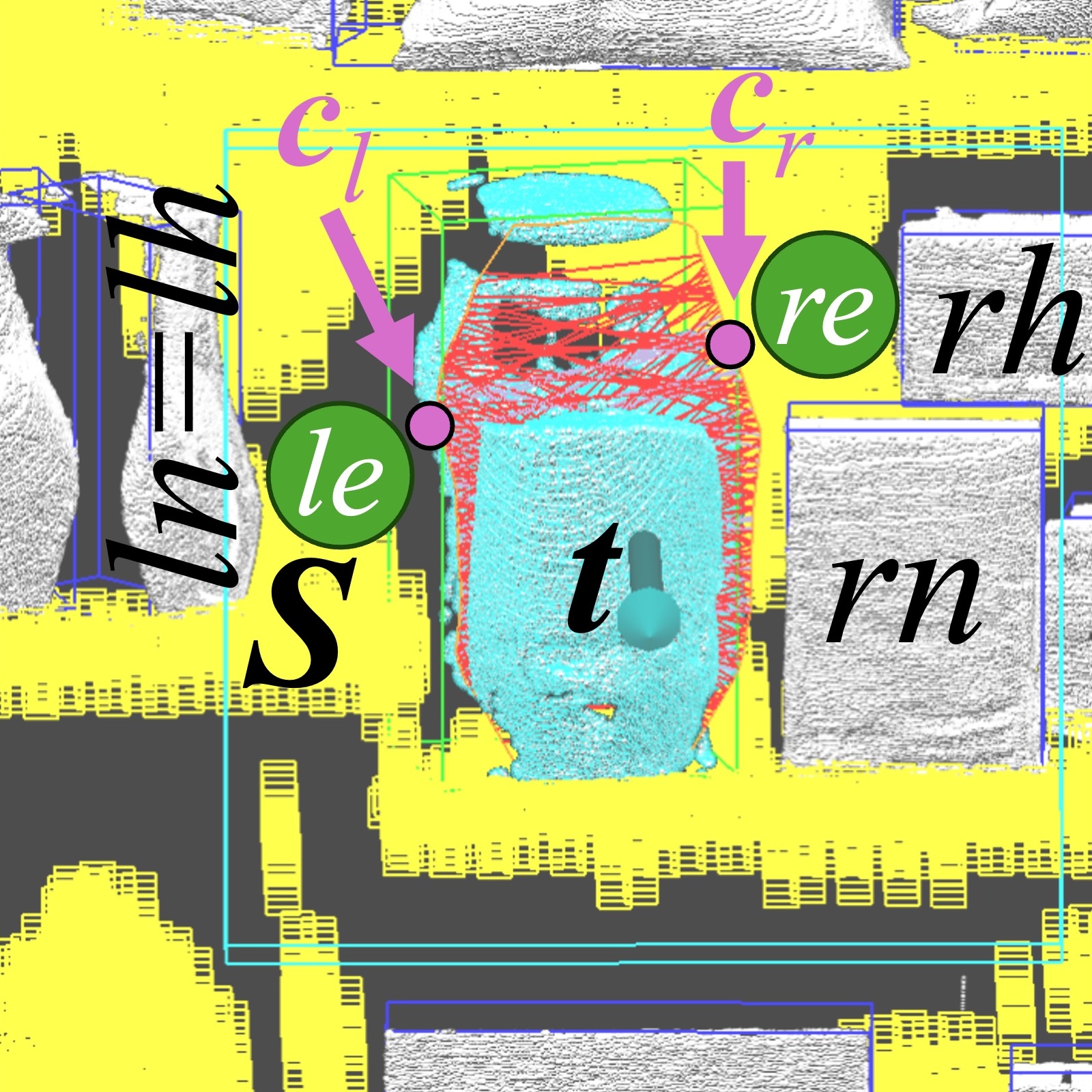}
         \caption{}
         \label{subfig:nudge_planning}
     \end{subfigure}
    \hfill
    \begin{subfigure}[b]{0.24\columnwidth}
         \centering
         \includegraphics[width=\textwidth]{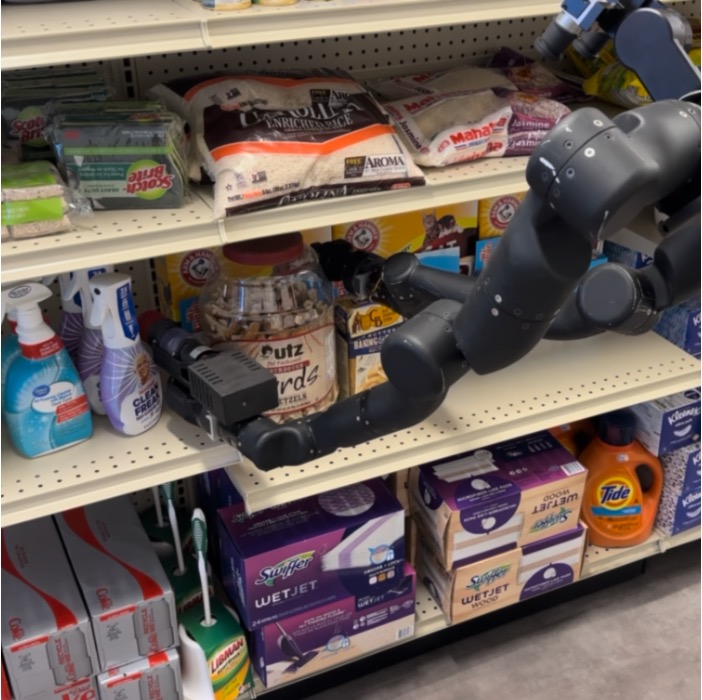}
         \caption{}
         \label{subfig:grasp_execution}
     \end{subfigure}
    \caption{Nonprehensile picking pipeline.~\ref{subfig:item_detection} shows the item detection in the head camera image (brown box) and the segmented target item point cloud (cyan).~\ref{subfig:grasp_planning} shows the shelf (yellow), fitted alpha shape (orange), and grasps candidates $\overline{\bm{c}_l\bm{c}_r}$ (red: rejected, cyan: in $\mathcal{C}$).~\ref{subfig:nudge_planning} shows a declutter planning scenario. $t$: target item. $n$: item adjacent to the target item. $h$: adjacent item at end effector height that may cause occlusion. $e$: end effector. $l,r$: left and right side. $\bm{S}$: shelf voxels. The $(\bm{c}_l,\bm{c}_r)$ choice in magenta is not occluded.~\ref{subfig:grasp_execution} shows unobstructed grasp approach to the $(\bm{c}_l,\bm{c}_r)$ shown in~\ref{subfig:nudge_planning}.}
    \vspace{-15pt}
\end{figure}

\subsection{Perception}
For details on TTT's item detection and scene segmentation pipeline \texttt{run\_perception}), we refer readers to~\cite{bajracharya2024demonstrating}, as it is beyond the scope of our contribution. However, several key design decisions in our pipeline were influenced by perception challenges. To prevent self-occlusion, visual observations are only taken when the robot is not interacting with the scene. This is feasible due to the quasi-static nature of the grocery picking task. During interaction, the robot relies entirely on proprioception and force feedback. %After each decluttering attempt, \texttt{run\_perception} is rerun to update the scene understanding. 
Additionally, because the target item is only reliably visible from the aisle-facing side (Fig.~\ref{subfig:item_detection}), both grasp point selection (\texttt{plan\_grasps}) and nudge planning (\texttt{plan\_declutter}) are performed in 2D using a $yz$-plane projection $\Pi_{yz}\bm{P}_t$.

%Additionally, due to occlusion from the shelf and surrounding items, the target item is reliably observed only from the aisle-facing side (Fig.~\ref{subfig:top_shelf_pc}). As a result, we opt to perform grasp point selection (\texttt{plan\_grasps}) and nudge planning (\texttt{plan\_declutter}) 2D using $\Pi_{yz}\bm{P}_t$.
%\aw{discuss limitations with 2D assumption: arm is cylinder and the item diameter change is symmetric along depth axis}

\subsection{Planning bimanual grasp points}
\label{sec:gras_planning}
Unlike suction and PJG grasps, a stable bimanual grasp can only be achieved with a limited subset of contact points on the item's surface. These points must satisfy force closure and friction cone constraints, which depend on the object's shape and surface normals~\cite{murray2017mathematical}. We have developed a grasp planning procedure inspired by dexterous hand grasping techniques~\cite{li2023frogger, wu2023learning}, detailed in Alg.~\ref{alg:plan_grasps}.

% Unlike suction and PJG grasps, only a very limited subset of possible contact points on the item surface leads to a stable bimanual nonprehensile grasp. Force closure and friction cone constraints, which are dependent on the object's shape and surface normal direction, must be satisfied~\cite{murray2017mathematical}. We devise a grasp planning procedure inspired by the dexterous hand grasping literature~\cite{li2023frogger, wu2023learning}. The procedure is summarized in Alg.~\ref{alg:plan_grasps}. 
% To estimate the target item's contour and surface normal, we first fit an alpha shape~\cite{edelsbrunner1983} 
% to $\Pi_{yz}\bm{P}_t$ and obtain $\partial\Pi_{yz}\bm{P}_t$ (Alg.~\ref{alg:plan_grasps} line ~\ref{alg:line:alpha_shape}). We also compute the 2D axis-aligned bounding box of the item on the $yz$-plane $\bm{B}_t$. 
% Next, we enumerate possible contact point pairs $(\bm{c}_l, \bm{c}_r)$ by taking regularly-spaced points on the left and right vertical edges of $\bm{B}_t$, $\bm{p}_l \in \partial_{y_-}\bm{B}_t, \bm{p}_r \in \partial_{y_+}\bm{B}_t$, and finding the intersections between $\overline{\bm{p}_l\bm{p}_r}$ and $\partial\Pi_{yz}\bm{P}_t$. This produces a pair of contact points.
% Finally, we evaluate the enumerated contact points by computing the \textit{grasp quality metric} $l_g$(Sec.~\ref{sec:grasp_quality_metric}). Fig.~\ref{subfig:grasp_planning} visualizes this process.

To estimate the target item's contour and surface normal, we first fit an alpha shape~\cite{edelsbrunner1983} to $\Pi_{yz}\bm{P}t$ and obtain $\partial\Pi_{yz}\bm{P}_t$ (Alg.~\ref{alg:plan_grasps} line~\ref{alg:line:alpha_shape}). We also compute the 2D axis-aligned bounding box (AABB) $\bm{B}_t$ on the $yz$-plane. Potential contact point pairs $(\bm{c}_l, \bm{c}_r)$ are generated by intersecting lines drawn between regularly spaced points on the vertical edges of $\bm{B}t$ with $\partial\Pi_{yz}\bm{P}_t$. $(\bm{c}_l, \bm{c}_r)$ are then evaluated using the grasp quality metric $l_g$ (Sec.\ref{sec:grasp_quality_metric}). Fig.\ref{subfig:grasp_planning} illustrates this process.

\begin{algorithm}
    \caption{\texttt{plan\_grasps}$(\Pi_{yz}\bm{P}_t)$}
    \begin{algorithmic}[1]
    % \Require{$\Pi_{yz}\bm{P}_t$}
    % \Comment{Projected target item point cloud}
    % \Ensure{$\mathcal{C}$}
    % \Comment{Candidate grasp points}
    % \State{$\mathcal{C}\gets \emptyset$}
    % First sample grasp points
    \State{$\partial\Pi_{yz}\bm{P}_t \gets $\texttt{alpha\_shape}($\Pi_{yz}\bm{P}_t$)} \label{alg:line:alpha_shape}
    \For{$\bm{p}_l\in$ $\partial_{y_-}\bm{B}_t$}
    \Comment{left vertical edge}
    \For{$\bm{p}_r\in$ $\partial_{y_+}\bm{B}_t$}
    \Comment{right vertical edge}
    % Compute the intersection points
    \State{$(\bm{c}_l, \bm{c}_r)\gets \overline{\bm{p}_l\bm{p}_r} \cap \partial\Pi_{yz}\bm{P}_t $}
    \State{$(\bm{n}_{l}, \bm{n}_{r})\gets$ \texttt{get\_normal}$(\partial\Pi_{yz}\bm{P}_t, \bm{c}_l, \bm{c}_r)$}
    \If{$l_g(\bm{c}_l, \bm{c}_r, \bm{n}_{l}, \bm{n}_{r}) < \infty$ $\land$ 
        \par
        \hskip \algorithmicindent\texttt{is\_reachable}$(\bm{c}_l, \bm{c}_r)$}
        \State{$\mathcal{C}$.add$\left(\left(\bm{c}_l, \bm{c}_r\right)\right)$}
    \EndIf
    % Solve the grasp QP
    \EndFor
    \EndFor
    \end{algorithmic}
    \label{alg:plan_grasps}
\end{algorithm}

\subsection{Formulating the grasp quality metric $l_g$}
\label{sec:grasp_quality_metric}
We quantitatively evaluate $(\bm{c}_l, \bm{c}_r)$ and seek those that can reject any disturbance wrench under the smallest grasp forces~\cite{ferrari1992planning, li2023frogger}. To this end, we consider a fixed set of planar disturbance wrench $\bm{W}\triangleq$~\eqref{eqn:disturbance}.
%We rank the quality of $(\bm{c}_l, \bm{c}_r)$ based on their ability to reject disturbance wrenches from any direction~\cite{murray2017mathematical}. As a quantitative metric, we seek the grasp points that can reject the disturbance wrenches while applying the least grasp forces~\cite{ferrari1992planning, li2023frogger}. To this end, we consider a fixed set of planar disturbance wrench:
\begin{equation}
\label{eqn:disturbance}
    \left\{(\bar{w}_y+\widetilde{w}_y, \bar{w}_z + \widetilde{w}_z, w_\tau)\mid \widetilde{w}_y^2+\widetilde{w}_z^2 \leq 1, |w_\tau|\leq \tau_\textrm{max} \right\}.
\end{equation}
%We only require $w_z\leq 0$ instead of requiring force closure (Ch. 3~\cite{murray2017mathematical}). This is due to the domain-specific observation that a net disturbance wrench with a $+\leftindex^{I}z$ component is unlikely in the shelf grasping scenario. 
The bias term $[\bar{w}_y, \bar{w}_z]=[0,-0.5]$ reflects the gravity force. $\bm{0}$ is an interior point of $\bm{W}$, so rejecting $\bm{W}$ is a sufficient condition for force closure~\cite{murray2017mathematical}.
%While $\bm{W}$ is ultimately a design choice, 

Consider a pair of contact points $(\bm{c}_l, \bm{c}_r)$ with contact normals $(\bm{n}_l, \bm{n}_r)$ a specific $\bm{w}\in\bm{W}$ and adopting a point contact model, the optimal contact force $\prescript{C}{}{\bm{f}} =[n, t]\in\mathbb{R}^2$ that rejects $\bm{w}$ under a positive-definite weight matrix $\bm{Q}$ can be computed by the following quadratic program (QP), whose objective $l_g(\bm{c}_l, \bm{c}_r, \bm{n}_l, \bm{n}_r; \bm{w})\triangleq$~\eqref{eqn:grasp_qp_objective}.
\begin{align}
% \begin{split}
    \min_{{}^{C_l} \bm{f}_{l}, {}^{C_r}\bm{f}_r} 
    % \sum_{i\in\{l,r\}}
    &\prescript{C_l}{}{\bm{f}_l}^T \bm{Q} \prescript{C_l}{}{\bm{f}_l}+
    \prescript{C_r}{}{\bm{f}_r}^T \bm{Q} \prescript{C_r}{}{\bm{f}_r},  \label{eqn:grasp_qp_objective} \\
    \mathrm{s.t.\;} &\prescript{I}{}{\bm{f}_l} + \prescript{I}{}{\bm{f}_r} + \prescript{I}[w_y,w_z]^T= \bm{0}, \label{eqn:force_sum_constraint} \\
    &\sum_{i\in\{l,r\}} \left[ \prescript{I}{}{\bm{p}}^{C_i} \right]_I \times \prescript{I}{}{\bm{f}_i} + [0,0,w_\tau]^T = \bm{0},  \label{eqn:torque_sum_constraint}\\ 
    &|t_i| \leq \mu n_i, 1 \leq n_i \leq n_{\textrm{max}}, i\in\{l,r\}. \label{eqn:friction_normal_force_constraints}
% \end{split} 
\end{align}
Eqn.~\eqref{eqn:force_sum_constraint} and~\eqref{eqn:torque_sum_constraint} require the net external wrench to be zero. Eqn.~\ref{eqn:friction_normal_force_constraints} represent friction cone under the Coulomb model and regularization on the normal force: a minimum is required to avoid trivial solutions~\cite{wu2023learning}, and a maximum is set to prevent large values that are impossible on hardware. Infeasibility implies $\bm{w}$ cannot be rejected. The quantitative \textit{grasp quality metric} is thus defined with the worst case scenario:
\begin{align}
\begin{split}
    l_g(\bm{c}_l, \bm{c}_r, \bm{n}_l, \bm{n}_r) \triangleq \max_{\bm{w}\in\bm{W}} l_g(\bm{c}_l, \bm{c}_r,\bm{n}_l, \bm{n}_r; \bm{w}).\\
\end{split}
\label{eqn:grasp_qp_any_w}
\end{align}
Fig.~\ref{subfig:grasp_planning} visualizes $\overline{\bm{c}_l\bm{c}_r}$ with finite $l_g$ as cyan segments.

Our formulation differs from literature (e.g.~\cite{ferrari1992planning, li2023frogger}), which fixes the maximum contact forces and find the largest arbitrary-direction disturbance magnitude that can be rejected. Instead, we choose $\bm{W}$ and find the smallest contact forces required to reject $\forall \bm{w}\in\bm{W}$. The two are equivalent for determining whether force closure, but the quantitative values have distinct implications. The grasp metrics from~\cite{ferrari1992planning, li2023frogger} consider disturbances from all directions as equal. Our formulation allows using $\bm{W}$ to weight $l_g$ toward different disturbances directions. For instance, in the shelf picking scenario, disturbance in the gravity($-z$) direction is much more relevant compared to in the $+z$ direction.

Lastly, we check if the grasp point is valid on hardware with \texttt{is\_reachable} subroutine. We approximate the end effectors as cylinders and compute the end effector poses that result in contacting the item at $\bm{c}_l, \bm{c}_r$. These poses are checked against the visually-estimated shelf dimensions to ensure the grasp can be executed without shelf collision.

\subsection{Computing $l_g$ in practice}
Evaluating Eqn.~\ref{eqn:grasp_qp_any_w} involves solving a maximin problem, the solution of which is not obvious. In this section, we show that for $\bm{W}$ satisfying Assumption~\ref{assump:w_structure}, the argmax $\bm{w}^*$ can only occur in a measure 1 set $\bm{W}^*$. Thus, Eqn.~\ref{eqn:grasp_qp_any_w} can be efficiently approximated through sampling $\bm{W}^*$, solving Eqn.~\ref{eqn:grasp_qp_objective} for each sample, and taking the maximum. Denote $l_g(\cdot; \bm{w}) \triangleq l_g(\bm{c}_l, \bm{c}_r,\bm{n}_l, \bm{n}_r; \bm{w})$ and $\bm{f} \triangleq \left[\prescript{C_l}{}{\bm{f}_{l}}, \prescript{C_r}{}{\bm{f}_{r}}\right]$, and the boundary of a closed set $\bm{S}$ with $\partial\bm{S}$.
\begin{assumption}
\label{assump:w_structure}
    % $
    %        \bm{W} = \left\{
    %        \left[w_y, w_z, w_\tau\right]^T
    %         \mid [w_y, w_z] \in \bm{W}_{yz}, w_\tau \in \left[\tau_{\mathrm{min}}, \tau_{\mathrm{max}}\right]
    %        \right\}.
    % $
    $\bm{W}=\bm{W}_{yz}\times\left[\tau_{\mathrm{min}}, \tau_{\mathrm{max}}\right]$
    is the Cartesian product of a closed convex $\bm{W}_{yz}$ and a closed interval on $\tau$.
\end{assumption}

\begin{lemma}%[Upper bound by force set boundary]
\label{lemma:force_interpolation}
For a given $\bm{w}\in\bm{W}$, $\exists \bm{w}_1, \bm{w}_2 \in \partial\bm{W}$ such that $w_{\tau}=w_{\tau 1}= w_{\tau 2}, \bm{w} = \beta\bm{w}_1 + (1-\beta)\bm{w}_2$ for some $\beta \in [0,1]$. Moreoever, 
    $l_g(\cdot; \bm{w}) \leq \max\left\{l_g(\cdot; \bm{w}_1), l_g(\cdot; \bm{w}_2)\right\}.$

%the optimal contact forces $\bm{f}^*$ in $l_g(\cdot; \bm{w})$ satisfies $\bm{f}^* = \beta\bm{f}^*_1 + (1-\beta)\bm{f}^*_2$.
\end{lemma}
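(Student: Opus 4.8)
The plan is to prove the two assertions separately: the geometric decomposition of $\bm{w}$ into two boundary points sharing its torque coordinate, and then the quasi-convexity-type bound on $l_g$ along that slice.

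For the decomposition, I would use Assumption~\ref{assump:w_structure} to write $\bm{w}=(\bm{w}_{yz},w_\tau)$ with $\bm{w}_{yz}\in\bm{W}_{yz}$ and $w_\tau\in[\tau_{\min},\tau_{\max}]$. Whenever $\bm{w}$ already lies on $\partial\bm{W}$ — which covers the cases $\bm{w}_{yz}\in\partial\bm{W}_{yz}$ and $w_\tau\in\{\tau_{\min},\tau_{\max}\}$ — I take $\bm{w}_1=\bm{w}_2=\bm{w}$ and any $\beta$. Otherwise $\bm{w}$ is interior to $\bm{W}$, hence $\bm{w}_{yz}$ is interior to $\bm{W}_{yz}$; picking any line through $\bm{w}_{yz}$ in the $yz$-plane, its intersection with $\bm{W}_{yz}$ is a nondegenerate segment $[\bm{a},\bm{b}]$ with $\bm{a},\bm{b}\in\partial\bm{W}_{yz}$ and $\bm{w}_{yz}=\beta\bm{a}+(1-\beta)\bm{b}$ for some $\beta\in(0,1)$; then $\bm{w}_1=(\bm{a},w_\tau)$ and $\bm{w}_2=(\bm{b},w_\tau)$ lie on $\partial\bm{W}$, have the common torque $w_\tau=w_{\tau1}=w_{\tau2}$, and satisfy $\bm{w}=\beta\bm{w}_1+(1-\beta)\bm{w}_2$. (In our setting $\bm{W}_{yz}$ is the closed unit disk, so this line argument is immediate from compactness.)

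For the bound, if either $l_g(\cdot;\bm{w}_1)$ or $l_g(\cdot;\bm{w}_2)$ is $+\infty$ (i.e. the corresponding QP is infeasible) the inequality is trivial, so I assume both are finite. The normal-force bounds in~\eqref{eqn:friction_normal_force_constraints} make each QP's feasible set bounded, and all constraints are closed, so minimizers $\bm{f}_1,\bm{f}_2$ exist. I then claim $\bm{f}\triangleq\beta\bm{f}_1+(1-\beta)\bm{f}_2$ is feasible for the QP at disturbance $\bm{w}$: since the contact geometry $(\bm{c}_l,\bm{c}_r,\bm{n}_l,\bm{n}_r)$ is fixed, the maps from the contact-frame forces to their item-frame counterparts and to the torques are linear, so~\eqref{eqn:force_sum_constraint}--\eqref{eqn:torque_sum_constraint} are affine jointly in (contact forces, disturbance); because $\bm{w}_1$ and $\bm{w}_2$ share the torque coordinate with $\bm{w}$, the $\beta$-combination of the equality systems for $\bm{w}_1$ and $\bm{w}_2$ is precisely the equality system for $\bm{w}$, so $\bm{f}$ satisfies it. Constraints~\eqref{eqn:friction_normal_force_constraints} involve $\bm{f}$ alone and define a finite intersection of half-spaces (in particular $n_i\ge 1$ is preserved because a convex combination of numbers $\ge 1$ is $\ge 1$), hence are convex and satisfied by $\bm{f}$. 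Finally, positive-definiteness of $\bm{Q}$ makes the objective $g$ convex, so $l_g(\cdot;\bm{w})\le g(\bm{f})\le\beta g(\bm{f}_1)+(1-\beta)g(\bm{f}_2)\le\max\{g(\bm{f}_1),g(\bm{f}_2)\}=\max\{l_g(\cdot;\bm{w}_1),l_g(\cdot;\bm{w}_2)\}$.

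The main obstacle — and essentially the only step needing care — is verifying that the wrench-balance constraints~\eqref{eqn:force_sum_constraint}--\eqref{eqn:torque_sum_constraint} are affine in (contact forces, disturbance) once the contacts are frozen, and noticing that this affinity \emph{together with} the shared torque coordinate is exactly what makes the convex combination of feasible points for $\bm{w}_1$ and $\bm{w}_2$ feasible for $\bm{w}$; the convexity of the quadratic objective and the infeasible case are routine.
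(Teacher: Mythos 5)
Your proposal is correct and follows essentially the same route as the paper: interpolate the optimal contact forces for $\bm{w}_1$ and $\bm{w}_2$, verify that the convex combination is feasible for the QP at $\bm{w}$ (via affinity of the wrench-balance constraints in the forces and disturbance, plus convexity of the friction and normal-force constraints), and then bound the objective. If anything, your chain is the more careful one — you treat the infeasible case and existence of minimizers explicitly, and you use the standard convexity bound $g(\beta\bm{f}_1+(1-\beta)\bm{f}_2)\le\beta g(\bm{f}_1)+(1-\beta)g(\bm{f}_2)$, whereas the paper's intermediate inequality $l_g(\cdot;\bm{w})\le\beta^2 l_g(\cdot;\bm{w}_1)+(1-\beta)^2 l_g(\cdot;\bm{w}_2)$ silently drops a cross term of indeterminate sign; both arguments arrive at the same final $\max$ bound.
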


\begin{proof}
The existence of $\bm{w}_1, \bm{w}_2$ follow directly from Assumption~\ref{assump:w_structure}.
Consider the optimal contact forces $\bm{f}^*_1, \bm{f}^*_2$ of $l_g(\cdot; \bm{w}_1), l_g(\cdot; \bm{w}_2)$. The interpolation $\beta \bm{f}^*_1+(1-\beta) \bm{f}^*_2$ is a feasible solution to the QP that defines $l_g(\cdot; \bm{w})$. Therefore, 
\begin{align*}
    l_g(\cdot; \bm{w}) &\leq \beta^2 l_g(\cdot; \bm{w}_1)+(1-\beta)^2 l_g(\cdot; \bm{w}_2),\\
     & \leq (1+2\beta(\beta-1))\cdot \max\left\{l_g(\cdot; \bm{w}_1), l_g(\cdot; \bm{w}_2)\right\},\\
     & \leq \max\left\{l_g(\cdot; \bm{w}_1), l_g(\cdot; \bm{w}_2)\right\}.
\end{align*}
The last inequality follows from $\beta\in[0,1]$.
\end{proof}

% \begin{corollary}
%     \begin{align*}
% \end{align*}
% \end{corollary}
\begin{lemma}%[Upper bound by torque interval boundary]
\label{lemma:torque_interpolation}
$
l_g(\cdot; \bm{w}) \leq \max_{\tau\in\{\tau_{\mathrm{min}}, \tau_{\mathrm{max}}\}}
l_g(\cdot;[w_y, w_z, \tau]^T).
$
\end{lemma}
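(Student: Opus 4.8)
The plan is to mimic the structure of Lemma~\ref{lemma:force_interpolation} but now interpolate along the $\tau$ coordinate. Fix $\bm{c}_l, \bm{c}_r, \bm{n}_l, \bm{n}_r$ and write $\bm{w} = [w_y, w_z, w_\tau]^T$ with $w_\tau \in [\tau_{\min}, \tau_{\max}]$. The key observation is that the QP defining $l_g(\cdot; \bm{w})$ depends on $w_\tau$ only through the single scalar term $[0,0,w_\tau]^T$ added in the torque balance constraint~\eqref{eqn:torque_sum_constraint}, and in a purely \emph{affine} way; the friction and normal-force constraints~\eqref{eqn:friction_normal_force_constraints} and the force balance~\eqref{eqn:force_sum_constraint} do not see $w_\tau$ at all. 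So I would write $w_\tau = \beta \tau_{\min} + (1-\beta)\tau_{\max}$ for the appropriate $\beta \in [0,1]$, take the optimal forces $\bm{f}^*_{\min}, \bm{f}^*_{\max}$ for the two endpoint problems $l_g(\cdot; [w_y, w_z, \tau_{\min}]^T)$ and $l_g(\cdot; [w_y, w_z, \tau_{\max}]^T)$, and form the convex combination $\bm{f} = \beta \bm{f}^*_{\min} + (1-\beta)\bm{f}^*_{\max}$.

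Next I would verify that this $\bm{f}$ is feasible for the QP at $\bm{w}$. The force-sum constraint~\eqref{eqn:force_sum_constraint} holds by linearity since $[w_y, w_z]$ is identical across all three problems. The torque-sum constraint~\eqref{eqn:torque_sum_constraint} holds because the torque of $\bm{f}$ is the $\beta$-combination of the endpoint torques, which cancel $-\tau_{\min}$ and $-\tau_{\max}$ respectively, and $\beta(-\tau_{\min}) + (1-\beta)(-\tau_{\max}) = -w_\tau$. The normal-force bounds $1 \le n_i \le n_{\max}$ survive convex combination because they are an interval constraint, and the friction-cone constraints $|t_i| \le \mu n_i$ survive because the second-order (Lorentz) cone is convex. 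Hence $\bm{f}$ is feasible, so $l_g(\cdot;\bm{w})$, being the \emph{minimum} of the quadratic objective, is at most the objective value at $\bm{f}$. Then the same chain of inequalities as in Lemma~\ref{lemma:force_interpolation} — using that the objective is a positive-definite quadratic form, so that the value at $\beta \bm{f}^*_{\min} + (1-\beta)\bm{f}^*_{\max}$ is bounded by $\beta^2 l_g(\cdot;\tau_{\min}) + (1-\beta)^2 l_g(\cdot;\tau_{\max}) \le \max\{l_g(\cdot;\tau_{\min}), l_g(\cdot;\tau_{\max})\}$ for $\beta \in [0,1]$ — gives the claim. An edge case to dispatch first: if either endpoint QP is infeasible, i.e. $l_g = \infty$ there, the right-hand side is $\infty$ and the inequality is trivial, so one may assume both endpoints feasible.

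The main obstacle is essentially bookkeeping rather than conceptual: one must be careful that the convex-combination argument for feasibility truly isolates the $w_\tau$ dependence, i.e. that changing only $w_\tau$ leaves the moment-arm matrices $[\prescript{I}{}{\bm{p}}^{C_i}]_I$ and the normals unchanged (they depend only on the geometry, not on the disturbance), so that the torque constraint is genuinely affine in $(w_\tau, \bm{f})$ jointly. Once that is noted, the proof is a near-verbatim copy of Lemma~\ref{lemma:force_interpolation} with the interval $[\tau_{\min}, \tau_{\max}]$ playing the role of the segment $[\bm{w}_1, \bm{w}_2]$; indeed one could even phrase it as a direct corollary of Lemma~\ref{lemma:force_interpolation} applied to the ``degenerate'' convex set $\{[w_y,w_z]\} \times [\tau_{\min},\tau_{\max}]$, whose boundary in the relevant sense consists of the two endpoints — but writing it out explicitly is cleaner and avoids worrying about whether a singleton cross an interval fits Assumption~\ref{assump:w_structure} verbatim.
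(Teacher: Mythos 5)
Your proof is correct and is exactly the argument the paper intends: its own proof of this lemma is the one-line remark that it ``follows the same approach as that of Lemma~\ref{lemma:force_interpolation}, with $\bm{w}$ replaced by $\tau$,'' and you have simply written that interpolation-of-optimal-forces argument out in full, including the feasibility check and the infeasible-endpoint edge case. No gaps.
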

\begin{proof}
The proof follows the same approach as that of Lemma~\ref{lemma:force_interpolation}, with $\bm{w}$ replaced by $\tau$.
\end{proof}

\begin{theorem}
\label{thm:boundary_w_argmax}
The argmax $\bm{w}^*$ in Eqn.~\eqref{eqn:grasp_qp_any_w} satisfies 
$
\bm{w}^*\in \left(\partial\bm{W}\cap \left\{[w_y, w_z, \tau]\mid \tau \in \{\tau_{\textrm{min}}, \tau_{\textrm{max}}\}\right\} \right)\triangleq \bm{W}^*.
$
\end{theorem}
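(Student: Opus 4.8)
The plan is to derive the statement from Lemmas~\ref{lemma:force_interpolation} and~\ref{lemma:torque_interpolation} by chaining each with a one-line optimality argument. As with any ``argmax'' claim, the real content is that the maximum in~\eqref{eqn:grasp_qp_any_w} is \emph{attained} in $\bm{W}^*$, which is what legitimizes replacing the maximization over $\bm{W}$ by sampling $\bm{W}^*$. Working under the standing hypothesis that the grasp has finite quality (i.e. $l_g(\cdot;\bm{w})<\infty$ for all $\bm{w}\in\bm{W}$, the only grasps kept in Alg.~\ref{alg:plan_grasps}), I would first record that $\bm{W}$ is compact by Assumption~\ref{assump:w_structure} and~\eqref{eqn:disturbance}, and that $\bm{w}\mapsto l_g(\cdot;\bm{w})$ is the optimal value of a strictly convex QP whose data enter affinely through the right-hand sides of~\eqref{eqn:force_sum_constraint}--\eqref{eqn:torque_sum_constraint}; hence it is continuous, and an argmax $\bm{w}^*=[w_y^*,w_z^*,w_\tau^*]$ exists.

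Step one: push the torque component to an endpoint. Lemma~\ref{lemma:torque_interpolation} yields $\tau^\dagger\in\{\tau_{\textrm{min}},\tau_{\textrm{max}}\}$ with $l_g(\cdot;\bm{w}^*)\le l_g(\cdot;[w_y^*,w_z^*,\tau^\dagger])$. Because the admissible $\tau$-interval in Assumption~\ref{assump:w_structure} does not depend on $(w_y,w_z)$, the point $[w_y^*,w_z^*,\tau^\dagger]$ still lies in $\bm{W}$, so optimality of $\bm{w}^*$ gives the reverse inequality and hence equality; I would replace $\bm{w}^*$ by this point and henceforth assume $w_\tau^*\in\{\tau_{\textrm{min}},\tau_{\textrm{max}}\}$.

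Step two: land on $\partial\bm{W}$. Applying Lemma~\ref{lemma:force_interpolation} to this $\bm{w}^*$ produces $\bm{w}_1,\bm{w}_2\in\partial\bm{W}$ with common torque coordinate $w_\tau^*$ and $l_g(\cdot;\bm{w}^*)\le\max\{l_g(\cdot;\bm{w}_1),l_g(\cdot;\bm{w}_2)\}$. Picking whichever attains the right-hand side, say $\bm{w}_1$, I have $\bm{w}_1\in\bm{W}$, so optimality of $\bm{w}^*$ forces $l_g(\cdot;\bm{w}_1)=l_g(\cdot;\bm{w}^*)$; thus $\bm{w}_1$ is also an argmax and lies in $\partial\bm{W}\cap\{\tau\in\{\tau_{\textrm{min}},\tau_{\textrm{max}}\}\}=\bm{W}^*$, which is the claim. (In fact Lemma~\ref{lemma:force_interpolation} could be skipped here, since once $w_\tau^*$ sits at an endpoint every $[w_y,w_z,w_\tau^*]$ with $(w_y,w_z)\in\bm{W}_{yz}$ is automatically a boundary point of $\bm{W}$ in $\mathbb{R}^3$; Lemma~\ref{lemma:force_interpolation} is what one would need for the stronger statement that $(w_y^*,w_z^*)$ can additionally be taken on $\partial\bm{W}_{yz}$.)

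I do not anticipate a serious obstacle: the argument is two invocations of the lemmas glued together with ``a feasible point cannot beat the maximizer.'' The points that need genuine care are (i) checking, at each ``replace $\bm{w}^*$'' step, that the new point is still in $\bm{W}$ before appealing to optimality --- this is precisely where the Cartesian-product structure of Assumption~\ref{assump:w_structure} is used --- and (ii) the existence of an argmax, which I would settle via compactness of $\bm{W}$ and continuity of the parametric-QP value function on the region where it is finite.
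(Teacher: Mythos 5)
Your argument is essentially the paper's own proof: the paper simply applies Lemma~\ref{lemma:force_interpolation} and Lemma~\ref{lemma:torque_interpolation} to an arbitrary $\bm{w}$ to produce a dominating point in $\bm{W}^*$, which is exactly your two replacement steps. Your additions --- existence of the argmax via compactness and continuity of the parametric-QP value, the check that each replacement stays in $\bm{W}$ via the product structure of Assumption~\ref{assump:w_structure}, and the observation that Lemma~\ref{lemma:force_interpolation} is what forces $(w_y^*,w_z^*)$ onto $\partial\bm{W}_{yz}$ (the one-dimensional set the sampling argument actually needs) --- are correct refinements of details the paper leaves implicit.
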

\begin{proof}
    Apply Lemma~\ref{lemma:force_interpolation} and~\ref{lemma:torque_interpolation} to any $\bm{w}$ yields a upper-bounding $\bm{w}^*\in\bm{W}^*$, where $ l_g(\cdot; \bm{w}) \leq  l_g(\cdot; \bm{w}^*)$.
\end{proof}

\begin{corollary}
$
\max_{\bm{w}\in\bm{W}} l_g(\cdot; \bm{w}) = \max_{\bm{w}\in\bm{W}^*} l_g(\cdot; \bm{w}).
$
\end{corollary}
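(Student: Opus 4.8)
The plan is to derive the equality by the usual sandwich: prove $\max_{\bm{w}\in\bm{W}} l_g(\cdot;\bm{w}) \ge \max_{\bm{w}\in\bm{W}^*} l_g(\cdot;\bm{w})$ and $\max_{\bm{w}\in\bm{W}} l_g(\cdot;\bm{w}) \le \max_{\bm{w}\in\bm{W}^*} l_g(\cdot;\bm{w})$ separately.

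The first inequality is immediate from the definition of $\bm{W}^*$ in Theorem~\ref{thm:boundary_w_argmax}: since $\bm{W}^* = \partial\bm{W}\cap\{[w_y,w_z,\tau]\mid \tau\in\{\tau_{\mathrm{min}},\tau_{\mathrm{max}}\}\}$ is a subset of $\bm{W}$, the maximum of $l_g(\cdot;\bm{w})$ taken over the smaller set cannot exceed the maximum taken over the larger one. I would also remark in one line that $\bm{W}^*$ is nonempty — it contains, for example, any point of $\partial\bm{W}_{yz}$ paired with $\tau_{\mathrm{min}}$ — so both maxima are over nonempty sets.

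For the reverse inequality I would invoke Theorem~\ref{thm:boundary_w_argmax} directly: for an arbitrary $\bm{w}\in\bm{W}$, the theorem (through the chain of Lemmas~\ref{lemma:force_interpolation} and~\ref{lemma:torque_interpolation} used in its proof) yields some $\bm{w}^*\in\bm{W}^*$ with $l_g(\cdot;\bm{w}) \le l_g(\cdot;\bm{w}^*) \le \max_{\bm{w}'\in\bm{W}^*} l_g(\cdot;\bm{w}')$. Taking the supremum over $\bm{w}\in\bm{W}$ on the left-hand side gives $\max_{\bm{w}\in\bm{W}} l_g(\cdot;\bm{w}) \le \max_{\bm{w}\in\bm{W}^*} l_g(\cdot;\bm{w})$. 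Combining the two inequalities closes the argument.

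The only genuine subtlety — more a matter of rigor than of difficulty — is making sure both ``$\max$''es are actually attained, so that writing $\max$ (and ``argmax'' in the theorem) is legitimate. This needs $l_g(\cdot;\bm{w})$ to be upper semicontinuous in $\bm{w}$ over the compact sets $\bm{W}$ and $\bm{W}^*$, which follows from standard parametric-optimization facts about the optimal value of the QP~\eqref{eqn:grasp_qp_objective} varying (semi)continuously with its right-hand-side data on the region where it is finite, together with the $l_g<\infty$ screening already imposed in Alg.~\ref{alg:plan_grasps}. If one would rather not import that machinery, the clean alternative is to state the corollary with $\sup$ in place of $\max$, in which case it reduces to the set inclusion $\bm{W}^*\subseteq\bm{W}$ plus Theorem~\ref{thm:boundary_w_argmax} and needs no continuity hypothesis at all.
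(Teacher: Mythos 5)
Your proof is correct and is essentially the paper's argument: the paper simply states that the corollary ``follows directly from Theorem~\ref{thm:boundary_w_argmax},'' and your sandwich of the trivial inclusion $\bm{W}^*\subseteq\bm{W}$ with the upper bound supplied by the theorem is exactly the intended unpacking. Your side remark about attainment of the maxima is a reasonable rigor point that the paper does not address, but it does not change the route.
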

\begin{proof}
This follows directly from Theorem~\ref{thm:boundary_w_argmax}.
\end{proof}
Therefore, even if $\bm{W}\subset \mathbb{R}^3$, Eqn.~\eqref{eqn:grasp_qp_any_w} can be approximated with $O(\sigma)$ samples for a desired sampling density of $\sigma$ on $\bm{W}$. This is done by sampling $\bm{W}^*$, which has measure 1, with density $\sigma$. For our choice of $\bm{W}$ in Eqn.~\ref{eqn:disturbance}, $\bm{W}^*$ is the collection of $\bm{w}$ where both constraints hold with equality.
% let $\textrm{Unif}_\sigma(\mathcal{C})$ denote the uniform grid of density $\sigma$ on the 2D unit circle with cardinality $|\textrm{Unif}_\sigma(\mathcal{C})|=2\pi\sigma$. The discrete set
% \begin{align}
% \begin{split}
%     \widehat{\bm{W}} \triangleq \{&(\bar{w}_y+\widetilde{w}_y, \bar{w}_z + \widetilde{w}_z, w_\tau)\mid\\
%     &(\widetilde{w}_y, \widetilde{w}_z)\in\textrm{Unif}_\sigma(\mathcal{C}), 
%     w_{\tau}\in\{-\tau_\textrm{max},\tau_\textrm{max}\} \}
% \end{split}
% \end{align}
% has cardinality $|\widehat{\bm{W}}|=4\pi\sigma$. 

\subsection{Planning declutter motion}
\label{sec:declutter_planning}
%The grasp point selection process considers only the target item itself. However, in cluttered environments, the desired grasp points may be occluded or inaccessible. To address this, we propose a decluttering strategy to expose the desired grasp points. Our approach first determines how to reposition the surrounding items by solving a packing problem. We then use nonprehensile ``nudging'' (Sec.~\ref{sec:hw_execution}) to execute the plan.
%The desired grasp points planned in Sec.~\ref{sec:gras_planning},~\ref{sec:grasp_quality_metric} may be occluded, thus We propose a decluttering strategy to expose them. We first determine how to reposition the surrounding items by solving a packing problem. We then use nonprehensile "nudging"—pushing objects for short distances with the end-effector tip—to execute the repositioning (Sec.~\ref{sec:hw_execution}).
Our method attempts to "pack" the end effectors next to the grasp point while ensuring all items fit within the available shelf space. 
All decluttering planning is performed using a 2D projection onto $\prescript{R}{}{yz}$. We observe that the items rest on a horizontal shelf platform, so $\prescript{R}{}{z}_i$ is fixed for each item $i$ throughout decluttering. Thus, the scene can be parameterized using the $\prescript{R}{}{y}$ coordinates of all relevant objects.
%The post-decluttering scene should have sufficient free space around the grasp points. Our formulation thus attempts to ``pack'' the end effectors next to the grasp point and fit all items within the available shelf space. We chose to account for the following: the target item $y_t$, the items immediately next to the target $y_{ln}, y_{rn}$, the nearest items at the height of the end effector $y_{lh}$, $y_{rh}$, and the end effectors $y_{le}$, $y_{re}$. Using axis-aligned bounding box $\bm{B}$ approximation for each item, we solve for a collision-free packing problem the 7 aforementioned $y$s.
We consider the following: the target item $y_t$, the items immediately next to it ($y_{ln}$, $y_{rn}$), the nearest items at the height of the end effector ($y_{lh}$, $y_{rh}$), the end effectors ($y_{le}$, $y_{re}$), and the shelf $\bm{S}$. A concrete example is visualized in Fig.~\ref{subfig:nudge_planning}. Using 2D AABB approximations $\bm{B}$ for each item, we solve a packing problem with these $y$s as the decision variables.
\begin{align}
    &l_d(\bm{c}_l, \bm{c}_r) = \min \sum_{i\in\{t, ln, rn, lh, rh, le, re\}} w_i( y_{i} - \bar{y}_{i} )^2 \label{eqn:nudge:objective}\\
    \mathrm{s.t.\;}&  \bm{B}_i \cap \bm{S} = \emptyset, \forall i, j \in \{t, ln, rn, lh, rh, le, re\}\label{eqn:nudge:no_shelf_collision}\\
    &\bm{B}_i \cap \bm{B}_j = \emptyset, \forall i, j \in \{t, ln, rn, lh, rh, le, re\} \label{eqn:nudge:no_item_collision}\\
    & y_{i} - \bar{y}_{i} \leq 0,
     y_{j} - \bar{y}_{j} \geq 0, \forall i \in \{ln, lh\}, j \in \{rn, rh\} \label{eqn:nudge:nonprehensile}\\
    &\min_{k\in \{t, lh, rh\}} |y_{k} - \bar{y}_{k}| = 0.
    \label{eqn:nudge:one_item_static}
\end{align}

Eqn.~\eqref{eqn:nudge:no_shelf_collision} and~\eqref{eqn:nudge:no_item_collision} encode the collision constraints constraints. Since no rigid connection exists between the end effectors and the manipulated items, only motion in the "pushing" direction is allowed. Eqn.~\eqref{eqn:nudge:nonprehensile} enforce the "no pulling" constraints. Eqn.~\eqref{eqn:nudge:one_item_static} restricts each end effector to pushing only one item at a time. As all decision variables are along $\prescript{R}{}{y}$, all constraints besides Eqn.~\ref{eqn:nudge:one_item_static} are linear. To enforce Eqn.~\ref{eqn:nudge:one_item_static}, we replace it with $|y_{k} - \bar{y}_{k}| = 0$ and solve a QP for each case of $k=\{t, lh, rh\}$. $l_d$ is the least of the 3 objectives. 
% Notably, the dependency on $\bm{c}_l$ and $\bm{c}r$ arises solely from the end effector bounding box $z$ coordinates, i.e., $\bm{B}{le}(\bm{c}_l)$ and $\bm{B}{re}(\bm{c}_r)$. This allows us to consider only obstacles near the end effectors. 
In Fig.\ref{subfig:nudge_planning}, we show a specific choice of $\bm{c}_l$ and $\bm{c}_r$ where the end effectors can be packed in without decluttering. Decluttering may still be necessary for other grasp point pairs in the same scene.

\subsection{Ranking declutter-grasp plans}
Our pipeline typically returns many declutter-grasp plan candidates. We pick the most promising one using $l_g$ and $l_n$. For top grasp candidates that are within $150\%$ of the best grasp cost $l_g^*$ in $\mathcal{C}$, we additionally impose a heuristic $h_g(\bm{c}_l, \bm{c}_r) \triangleq \sum_{i\in\{l,r\}}-\log(1-c_{iz}^4)$, which encourages staying close to the geometric center of the target item.
The comparison procedure for two plans are summarized below:\\
1. If both require decluttering, pick the one with smaller $l_d$\\
2. If only one requires decluttering, pick the one without\\
3. If both $l_g \leq 150\% \cdot l_g^*$, return the one with smaller $h_g\cdot l_g$
4. Return the one with smaller $l_g$.

% we use the comparison procedure based on $l_d, l_g$ in Algorithm~\ref{alg:sort_grasp_nudge}. A declutter-grasp plan with no nudging is always preferred. 
% \begin{algorithm}
%     \caption{\texttt{compare\_declutter\_grasp}}
%     \begin{algorithmic}[1]
%     \Require{$(\bm{c}_{l1},\bm{c}_{r1}, \bm{d}_1), (\bm{c}_{l2},\bm{c}_{r2}, \bm{d}_2), l^*_{g}$}
%     \Comment{declutter-grasp plans, best grasp metric}
%     \Ensure{$(\bm{c}_{l},\bm{c}_{r}, \bm{d})$}
%     \Comment{the better of the two}
%     \If{${\bm{d}_1=\textrm{no-op}}\land {\bm{d}_2=\textrm{no-op}}$}
%         \If{$\left(l_g(\bm{c}_{l1},\bm{c}_{r1}) \leq 1.5l^*_g\right) \land \left(l_g(\bm{c}_{l2},\bm{c}_{r2}) \leq 1.5 l^*_g\right)$}
%             \State{\textbf{return} $\argmin_{i=1,2} l_g(\bm{c}_{li},\bm{c}_{ri})\cdot h_g(\bm{c}_{li},\bm{c}_{ri})$}
%         \Else
%             \State{\textbf{return} $\argmin_{i=1,2} l_g(\bm{c}_{li},\bm{c}_{ri})$}
%         \EndIf
%     \ElsIf{${\bm{d}_1\neq\textrm{no-op}}\land {\bm{d}_2\neq\textrm{no-op}}$}
%         \State{\textbf{return} $\argmin_{i=1,2} l_d(\bm{c}_{li},\bm{c}_{ri})$}
%     \Else
%         \State{\textbf{return} $\arg_{i=1,2} \bm{d}_i=$\textrm{no-op}}
%     \EndIf
%     \end{algorithmic}
%     \label{alg:sort_grasp_nudge}
% \end{algorithm}

\subsection{Hardware execution}
\label{sec:hw_execution}
The decluttering plan is executed on hardware through 4 progressively larger \textit{nudges}, which are a sequence of hand-designed motion primitives. In a nudge, the gripper inserts in +$\prescript{R}{}{x}$ until resistance is detected (Fig.~\ref{subfig:nudge_insert}), pushes laterally in $\prescript{R}{}{y}$ (Fig.~\ref{subfig:nudge_push}), then retracts in -$\prescript{R}{}{x}$ (Fig.~\ref{subfig:nudge_retract}). The insertion point of the first nudge is the center of the gap to be widened. As the gap grows with each nudge, the end effector inserts deeper along $+x$ and nudge closer to the item's center of mass, in turn allowing farther pushes along $\prescript{R}{}{y}$ until the desired $y$ displacements from Eqn.~\eqref{eqn:nudge:one_item_static} are achieved.

Once enough clearance is created around the grasp points, the end effectors first travel along +$\prescript{R}{}{x}$ to reach the \textit{approach} pose (Fig.~\ref{subfig:grasp_insert}, Sec.~\ref{sec:grasp_quality_metric}). Next, they move toward each other along $\overleftrightarrow{\bm{c}_l\bm{c}_r}$ to establish contact. A hybrid position/force controller~\cite{raibert1981hybrid} is engaged upon contact establishment to maintain contact force along $\overleftrightarrow{\bm{c}_l\bm{c}_r}$, which is orthogonal to $x$. Lastly, both end effectors retract along $-x$ at an identical velocity to extract the item from the shelf.

\begin{figure}
    \vspace{6pt}
    \begin{subfigure}[ht]{0.24\columnwidth}
         \centering
         \includegraphics[width=\textwidth]{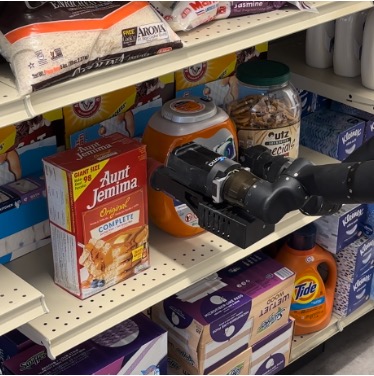}
         \caption{Insertion 2/4}
         \label{subfig:nudge_insert}
     \end{subfigure}
     \hfill
     \begin{subfigure}[ht]{0.24\columnwidth}
         \centering
         \includegraphics[width=\textwidth]{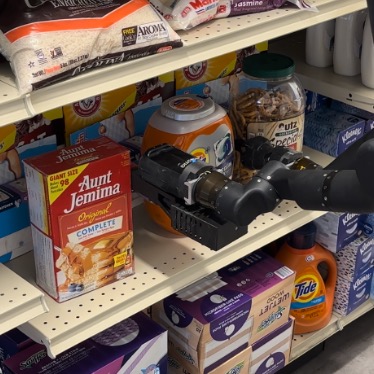}
         \caption{Push 3/4}
         \label{subfig:nudge_push}
     \end{subfigure}
     \hfill
    \begin{subfigure}[ht]{0.24\columnwidth}
         \centering
         \includegraphics[width=\textwidth]{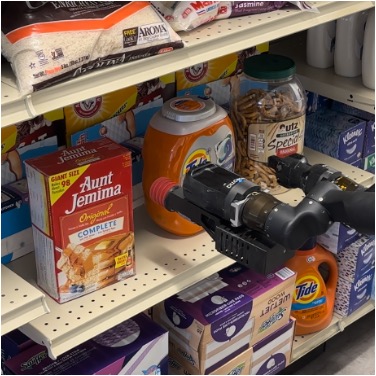}
         \caption{Retract 3/4}
         \label{subfig:nudge_retract}
     \end{subfigure}
     \hfill
    \begin{subfigure}[ht]{0.24\columnwidth}
         \centering
         \includegraphics[width=\textwidth]{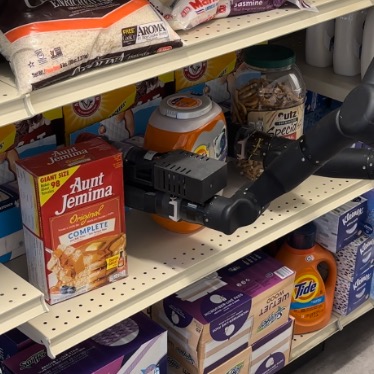}
         \caption{Grasp}
         \label{subfig:grasp_insert}
     \end{subfigure}
    \caption{Nudging sequence.~\ref{subfig:nudge_insert} shows the second insertion in the 4-nudge sequence, where the gap between items remains minimal.~\ref{subfig:nudge_push} and~\ref{subfig:nudge_retract} illustrate the third push and retract actions. At this point, the gap becomes clearly visible. Following this sequence, sufficient clearance is created for the gripper to approach the grasp points, as shown in~\ref{subfig:grasp_insert}.}
    \vspace{-7pt}
\end{figure}

\section{Evaluation}
\label{sec:experiments}
We performed hardware evaluation at two locations: \textit{mock grocery}, our replica grocery store (Fig.~\ref{subfig:mock_grocery}), and \textit{real grocery}, an unmodified real-world grocery store (Fig.~\ref{subfig:real_grocery}). \textit{Real grocery} experiments are discussed in Sec.~\ref{sec:real_world_eval}. The rest of Sec.~\ref{sec:experiments} pertains to \textit{mock grocery} experiments.%Trial footages at both location are shown in our supplementary video. We emphasize that despite the diverse environment and items tested, no method or parameter were modified.

% \begin{figure}[b]
%     \vspace{-15pt}
%      \begin{subfigure}[b]{0.48\columnwidth}
%          \centering
%          \includegraphics[width=\textwidth]{figures/mock_grocery.jpg}
%          \caption{\textit{Mock grocery} with locations of the test shelves marked.}
%          \label{subfig:mock_grocery}
%      \end{subfigure}
%      \hfill
%     \begin{subfigure}[b]{0.48\columnwidth}
%          \centering
%          \includegraphics[width=\textwidth]{figures/real_grocery.jpg}
%          \caption{An aisle in \textit{real grocery} with 2 test items marked.}
%          \label{subfig:real_grocery}
%      \end{subfigure}
%     \caption{Grocery store environments used for testing}
%     \label{fig:locations}
% \end{figure}

\subsection{Test scene selection}
\label{sec:scene_selection}
All trials were split evenly across $3$ shelves at different locations in the mock grocery store (Fig.~\ref{subfig:bottom_shelf_pick}-\ref{subfig:top_shelf_pick},~\ref{subfig:mock_grocery}). 34 bulky and heavy grocery items were used for evaluation (Fig.~\ref{subfig:item_family_photo}). Free space picking trials are evenly split across all items. The target item is placed on the prescribed shelf with no occlusion. % anywhere that is not obstructed by shelf boundaries. 
For cluttered scene picking trials, $11$ deformable items were excluded from cluttered scene picking due to their tendency to deform instead of creating any occlusion during robot interactions. We designed a sampling procedure such that both sides of the target item are occluded. First, we uniformly sample the left, target, and right items. We then sample whether to align the items along the sides of the shelf. If the items are aligned with a side, we randomly drop the aligning non-target item to create shelf occlusion. %Samples that require multiple instances of the same item or has insufficient clearance for end effector insertion are discarded. 
% We leave a 4-6cm gap between the left item and the target item, and a 1-3cm gap between the target item and the right item. These gaps are chosen to match TTT's end effector tip dimensions so the robot can access the side of the items. However, these gaps are still far too narrow for a full insertion to establish a grasp. No gap is left between items and shelves.
\begin{table}[t]
\caption{Test item and shelf physical properties. Item dimensions are the $I$-frame AABB extents. Shelf dimension definitions are shown in Fig.~\ref{fig:frame_definitions}.}
\label{tab:objects_list}
\begin{center}
\begin{tabular}{|c|c||c|c|c|c|}
\hline
  \textbf{Item}& Min./Median/Max.&\textbf{Shelf}& W*H*D, Plat. H(cm)\\
\hline
 $x$(cm)& 6 / 14 / 29 & Bottom & 91*42*47, 60\\
\hline
 $y$(cm)& 14 / 23 / 41 & Center & 91*48*56, 83\\
\hline
 $z$(cm)& 12 / 24 / 41 & Top & 91*42*56, 146\\
\hline
Weight(g) & 900 / 1825 / 3098 & - & -\\
\hline
\end{tabular}
\end{center}
\vspace{-15pt}
\end{table}
% \begin{table}[h]
% \caption{Test item physical properties. The extents shown is the $I$-frame bounding box. The robot observes the $\prescript{I}{}{yz}$ face and largely grasps along the $\prescript{I}{}{y}$ axis. \aw{todo}}
% \label{tab:objects_list}
% \begin{center}
% \begin{tabular}{|c|c|c|c|c|}
% \hline
%  & UPC & Extents(cm) & Weight(g) & Deformable\\
% \hline
% 1 & 31/34 &  11/15 & - & Yes\\
% \hline
% \textbf{Min.} & \\
% \hline
% \textbf{Median} & \\
% \hline
% \textbf{Max.} & \\
% \hline
% \end{tabular}
% \end{center}
% \end{table}

% \begin{table}[h]
% \caption{Shelf dimensions (cm).}
% \label{tab:shelf_dimensions}
% \begin{center}
% \begin{tabular}{|c|c|c|c|}
% \hline
% & Platform H & W*H*D  \\
% \hline
% Bottom & 60 & 91*42*47 \\
% \hline
% Center & 83 & 91*48*56 \\
% \hline
% Top & 146 & 91*42*56\\
% \hline
% \end{tabular}
% \end{center}
% \end{table}

\subsection{Evaluation protocol}
The robot is initialized in a random location in the grocery store and a target item is given. A trial is successful if the robot retrieves the item. ``Grasp failure'' occurs if a stable grasp is not established or if the item slips out of the grasp. ``Declutter failure'' occurs when insufficient clearance is created and grasping the object is still impossible. Due to perception noise, our pipeline occasionally fails to find any grasp. A new visual scene estimation is taken, after which the trial continues. If a module not related to our contribution fails, we log a ``system'' failure and discard the trial. Examples include failing to detect the target item (perception) or to reach a valid joint configuration (planning).

\subsection{Picking in free space}
Results of the free space picking trials are summarized in Tab.~\ref{tab:picking_summary}. Fig.~\ref{fig:freespace_scatter} shows the distribution of trial outcomes across item width $\prescript{I}{}{y}$ and weight. An overall success rate of $90.2\%$ was achieved across $102$ trials, proving the merit of bimanual grasping. As expected, no decluttering was attempted.

\begin{figure}[thpb]
  \vspace{4pt}
  \centering
  \begin{subfigure}[b]{0.48\columnwidth}
     \centering
      \includegraphics[width=\textwidth]{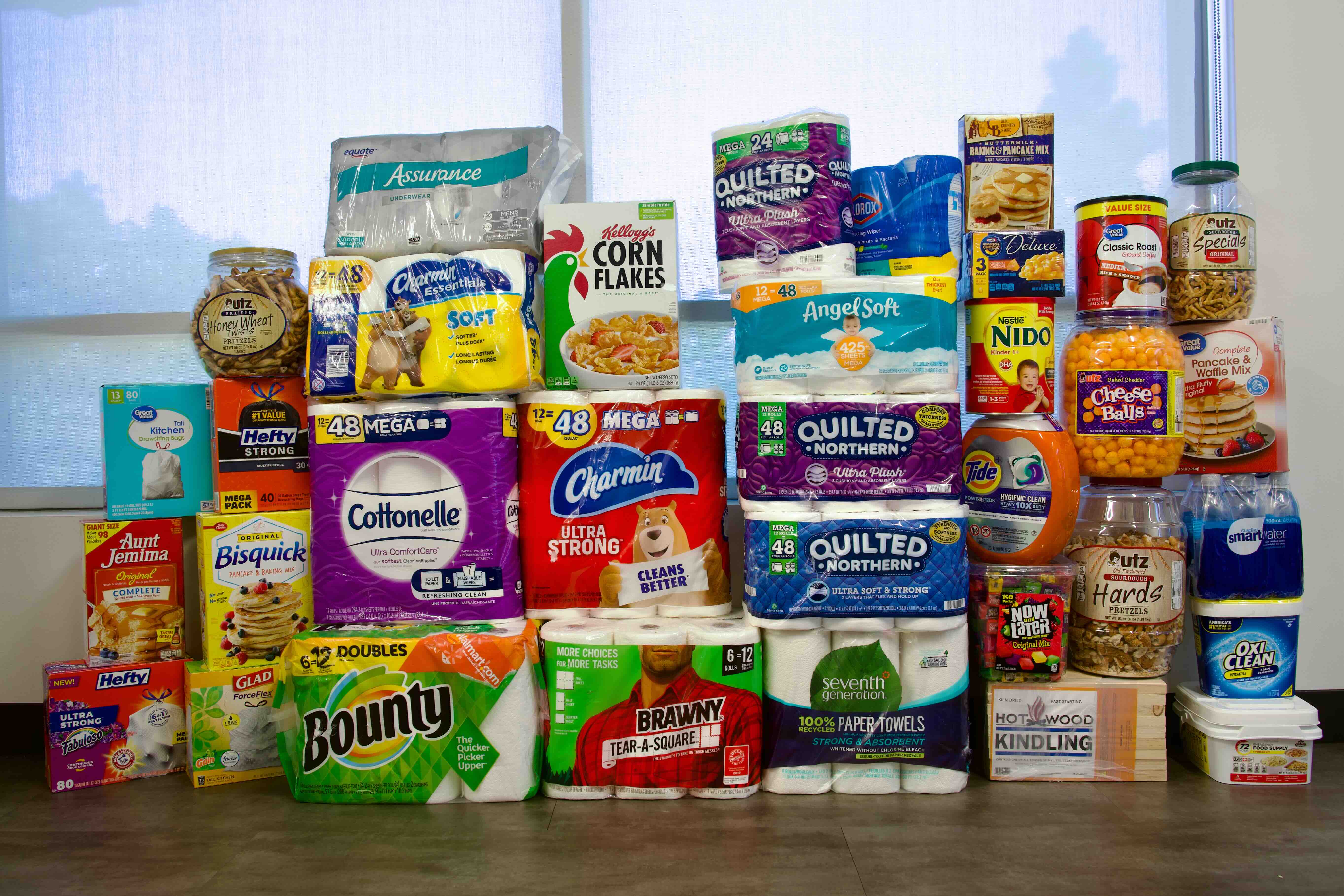}
     \caption{\textit{mock grocery} test items}
     \label{subfig:item_family_photo}
 \end{subfigure}
 \begin{subfigure}[b]{0.45\columnwidth}
     \centering
     \includegraphics[width=\textwidth]{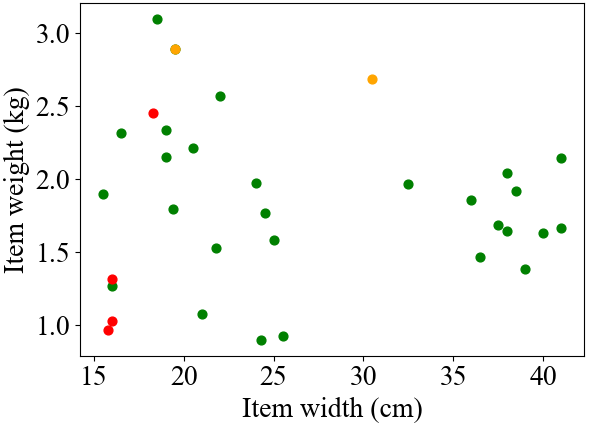}
     \caption{Free space grasping results}
     \label{subfig:freespace_grasp_scatter}
 \end{subfigure}
  \caption{Item physical properties and freespace picking trial success rate. Red: 1/3. Orange: 2/3. Green: 3/3. Narrower and heavier items failed more frequently.}
  \label{fig:freespace_scatter}
  \vspace{-8pt}
\end{figure}

\begin{table}[h]
\caption{Picking trial results for free space (FS), clutter (Clutter), and clutter with no decluttering (Ablation). The number of trials per shelf is shown in parenthesis. Bottom(B), center(C), top(T) shelf results are shown.}
\label{tab:picking_summary}
\begin{center}
\begin{tabular}{|c|c|c|c|c|c|c|c|c|}
\hline
 & \multicolumn{3}{c|}{FS} & \multicolumn{3}{c|}{Clutter} & Ablation\\
\hline
\textbf{Overall Success} & \multicolumn{3}{c|}{\textbf{90.2\%}} & \multicolumn{3}{c|}{\textbf{66.7\%}} & \textbf{33.3\%} \\
\hline
Shelf & B & C & T & B & C & T & C \\
\hline
Success & 31 & 30 & 31 & 11 & 13 & 6 & 5\\
\hline
Grasp failed & 3 & 4 & 3& 3 & 2 & 6 & 10\\
\hline
Declutter failed & - & - & - & 1 & 0 & 3 & -\\
\hline
Total & 34 & 34 & 34 & 15 & 15 & 15 & 15\\
\hhline{|=|=|=|=|=|=|=|=|}
``No grasp'' retries & 0 & 0 & 0 & 2 & 2 & 1 & 0 \\
\hline
System: detection & 2 & 0 & 2 & 0 & 0 & 5 & 0 \\
\hline
System: planning & 0 & 0 & 3 & 4 & 1 & 1 & 0 \\
\hline
\end{tabular}
\end{center}
\vspace{-15pt}
\end{table}

\subsection{Picking in clutter}
Tab.~\ref{tab:picking_summary} presents the results of 45 trials conducted in randomly sampled cluttered scenes (Sec.~\ref{sec:scene_selection}). We also performed ablation study on the 15 center-shelf scenes, where the robot attempted to grasp items without decluttering. Our method outperformed the ablation and achieved an overall success rate of 66.7\%, demonstrating that decluttering greatly enhances pick performance in cluttered scenes. We attribute the higher grasp failures compared to free space to insufficient decluttering, where minor rotations of the item or partial obstructions hindered proper end effector insertion, and noisier perception due to clutter.

\subsection{Real-world grocery store evaluation}
\label{sec:real_world_eval}
We tested our method in a operational grocery store outside of business hours. %For more details on deploying the robot in an actual store, we refer readers to~\cite{bajracharya2024demonstrating}. 
The robot was tasked with retrieving five items not stocked in \textit{mock grocery} and succeeded in two. Among the three failures, one occurred because the item was outside the robot's reachable workspace, while the other two were due to grasping and decluttering failures, respectively. These results underscore the challenges of adapting robotic systems to in-the-wild environments. However, despite these difficulties, the trials demonstrate the feasibility of our concept in real-world settings and validate the core approach.

% \begin{table}[h]
% \caption{\textit{Real grocery} experiments}
% \label{tab:real_grocery_exp}
% \begin{center}
% \begin{tabular}{|c|c|c|c|}
% \hline
% \aw{todo} & Platform H & W*H*D  \\
% \hline
% Bottom & 60 & 91*42*47 \\
% \hline
% Center & 83 & 91*48*56 \\
% \hline
% Top & 146 & 91*42*56\\
% \hline
% \end{tabular}
% \end{center}
% \end{table} 

\section{Failure analysis}
%The outcome of a grasp trial is ultimately a consequence of many factors in the system. Therefore, in this section we consider the whole TTT system and expand our discussion beyond the components pertaining to our contribution. 
We empirically found that shelf height and item width impact the success rate, which we attribute to the following.\\
% the grasping success rate is ultimately affected by many factors. We identified two root causes of th
\textbf{Kinematic limitations:}
%The methods proposed in this work primarily consider the relative poses between the item and the robot end effectors. However, operating near the boundaries of the workspace may jeopardize the abstraction of robot kinematics. We observed two kinematic-related failure modes. 
First, the robot may fail to find a motion plan at planning time, leading to a "system" failure. This is most common on shelves with extreme heights, where the robot either cannot reach the desired end effector poses, or crouches with torso links close to self-collision (Fig.~\ref{subfig:bottom_shelf_pick}). It is also more frequent in cluttered scenes, where lateral nudging motion is needed. The second failure occurs during contact force application, when the hybrid F/T controller attempts to bring the end effectors together. If this is not kinematically feasible, the IK optimization (Sec.~\ref{sec:robot_system}) may return a solution that mismatches the prescribed contact force, leading to a net wrench on the item and a grasp failure. This is most common with narrow items and tall shelves, where the wrist links are brought close to collision (Fig.~\ref{subfig:kinematic_limits}).\\ % Cluttered scenes further exacerbate this issue, as items in shelf corners may lead to poor normal estimations due to perception challenges.
%Firstly, the robot may simply fail to find a motion plan. This occurs at planning time and is consequently reported as a ``systems'' failure. This happens most frequently on the bottom shelf and occasionally on the top shelf. On the bottom shelf, the robot is crouched down with torso links in close proximity (Fig.~\ref{subfig:bottom_shelf_pick}), and potential self-collisions lead to a fragmented free space. On the top shelf, the end effector poses required to form a grasp may be unreachable. This failure mode is more significant in the cluttered scenes, where lateral end effector motion may be needed for nudging.
% The other failure mode occurs when the robot is actively applying a contact force. In order to maintain the contact force, the feedback admittance controller attempts to bring the two end effectors toward each other. However, if such motion is infeasible, the robot cannot apply the prescribed contact force. This results in a nonzero net external wrench and destabilizes the nonprehensile grasp. This failure is most prominent in narrow items on the top shelf, where the outstretched kinematic configuration brings the ``wrist'' links close to collision(Fig.~\ref{subfig:kinematic_limits}). This is also exacerbated in cluttered scenes, where the object may be in the corner of the shelf, and perception challenges may lead to poor normal estimations.
\textbf{Camera perspective:}
As the robot only observes the scene from one configuration, the chosen camera perspective significantly affects what is occluded. In the ideal scenario, the top of the item is partially observable and the image is level. However, due to kinematic limits, on the top shelf the robot is largely restricted to observing the front of the item with a tilted angle (Fig.~\ref{subfig:top_shelf_pc}). This impacts item detection and point cloud segmentation, which in turn degrades the item shape and dimension estimations. Consequently, our pipeline will plan with incorrect information. %For instance, if a non-target item is not detected, declutter planning will assume the item is a part of the fixed shelf and thus immobile. This leads to unnecessarily complex declutter plans.

\begin{figure}[t]
\vspace{4pt}
  \centering
 \begin{subfigure}[b]{0.32\columnwidth}
     \centering
     \includegraphics[width=\textwidth]{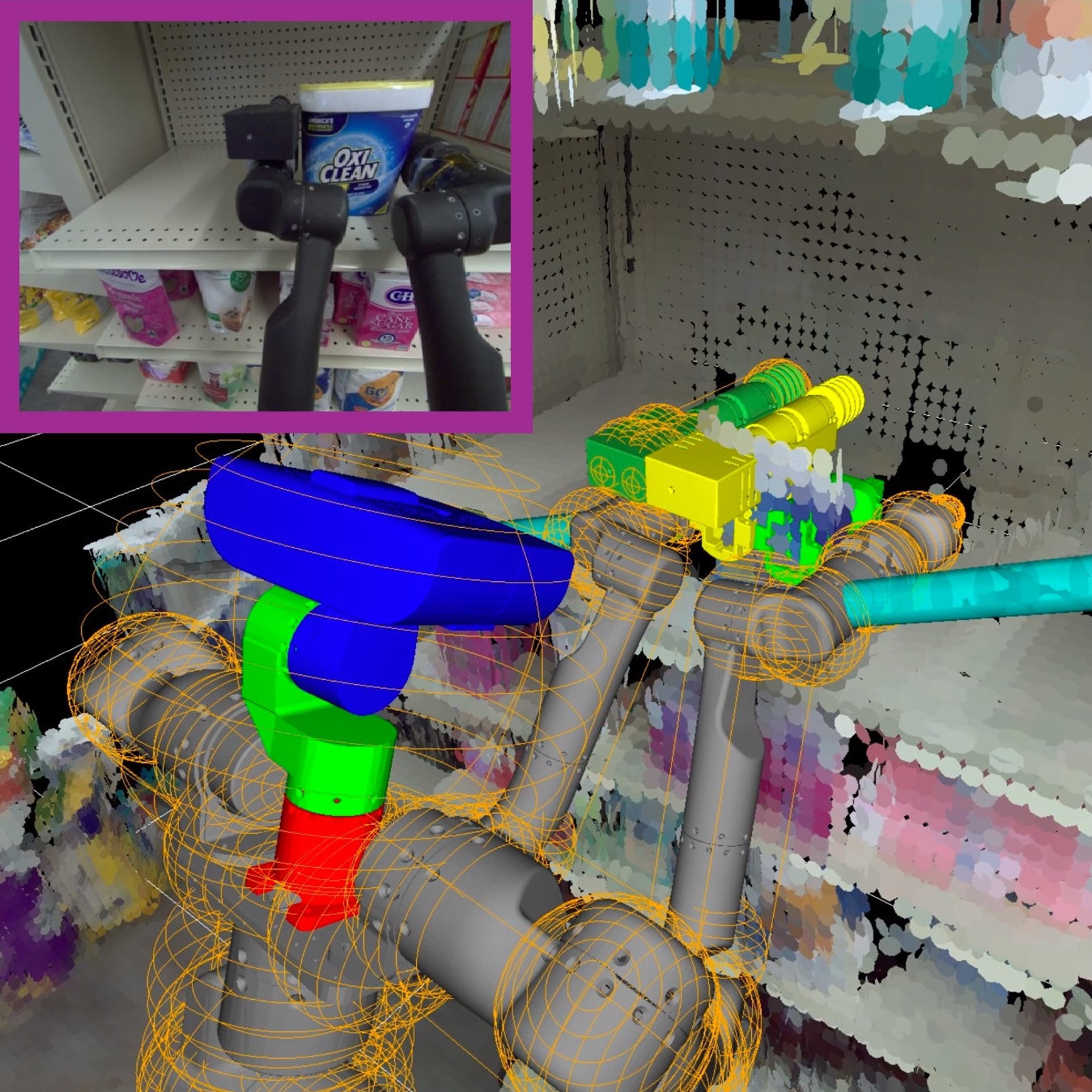}
     \caption{Kinematic limits}
     \label{subfig:kinematic_limits}
 \end{subfigure}
 \begin{subfigure}[b]{0.32\columnwidth}
     \centering
     \includegraphics[width=\textwidth]{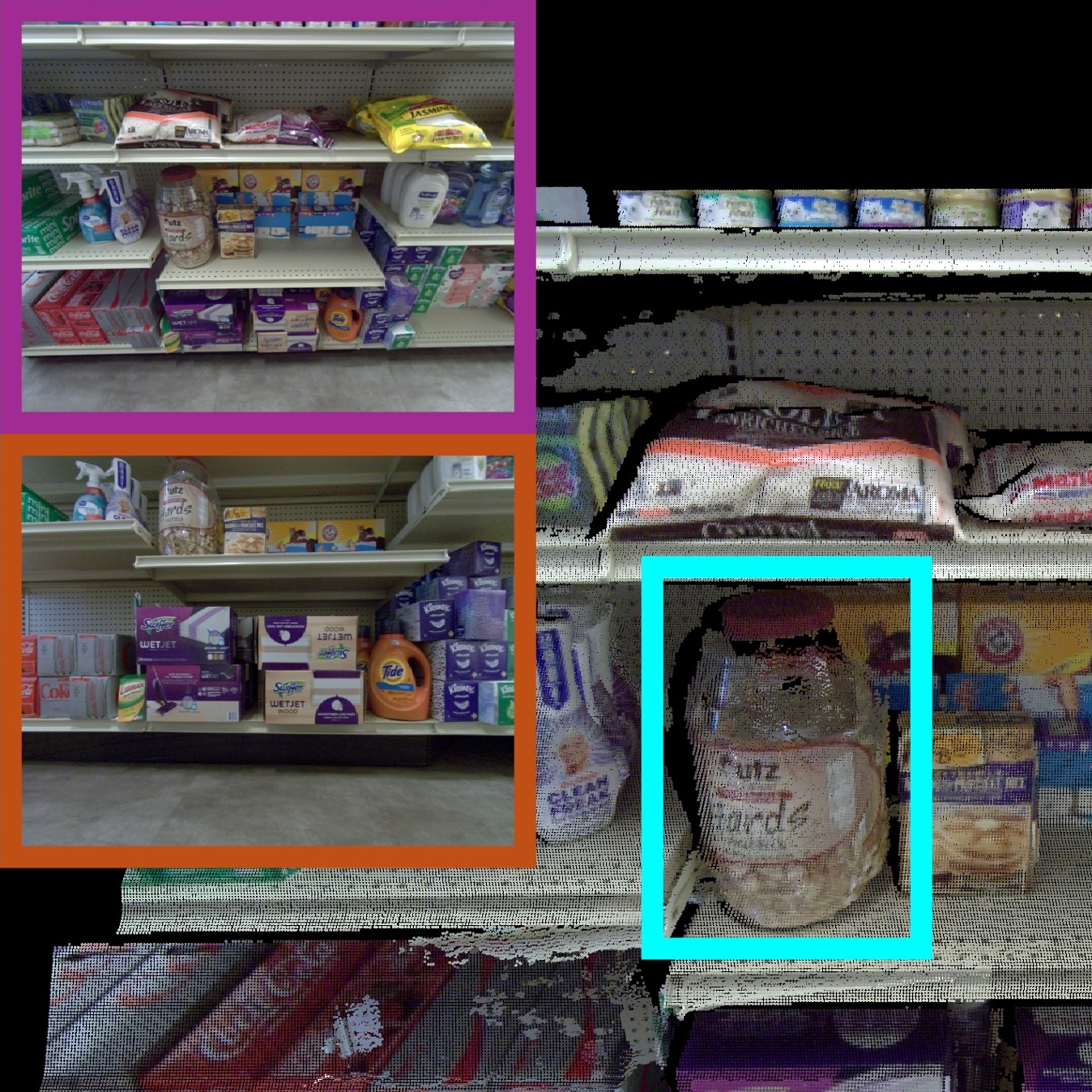}
     \caption{Bottom shelf view}
     \label{subfig:bottom_shelf_pc}
 \end{subfigure}
  \begin{subfigure}[b]{0.32\columnwidth}
     \centering
      \includegraphics[width=\textwidth]{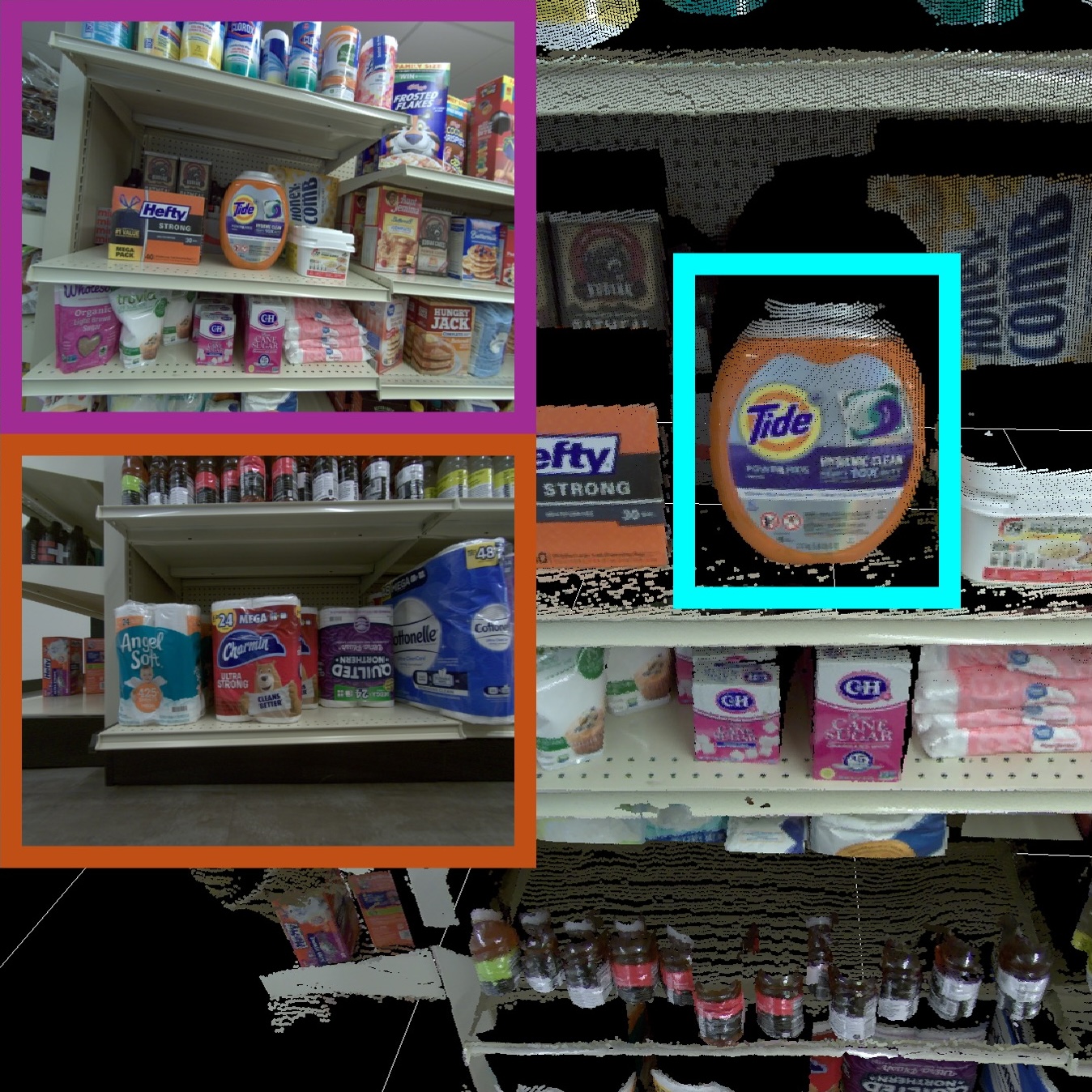}
     \caption{Top shelf view}
     \label{subfig:top_shelf_pc}
 \end{subfigure}
  \caption{Failures modes. Purple: head camera image. ~\ref{subfig:kinematic_limits} shows the proximity of the wrist. Orange lines: collision bodies that should not intersect.~\ref{subfig:bottom_shelf_pc} and~\ref{subfig:top_shelf_pc} show the camera perspective's impact on perception quality. Cyan: target item. Brown: chassis camera image. Note the following undesirable phenomena on the top shelf: missing points in the point cloud, strictly frontal view, head camera tilt, and lack of chassis camera observation of the target item.}
  \label{fig:failures}
  \vspace{-15pt}
\end{figure}

\section{Conclusion and future work}
We introduce bimanual nonprehensile manipulation as a novel approach to in-the-wild robotic picking. Motivated by grocery shopping, our method uses nonprehensile decluttering followed by bimanual grasping to retrieve a wide range of ungraspable items from cluttered shelves. Evaluations in a grocery store replica and a real-world store demonstrate our approach as a promising strategy to extend the range of manipulable objects without hardware modifications.

Our work made a number of limiting design choices, such as 2D grasp point planning and predefined nudging primitives, to circumvent the challenge with occlusions and modeling. Lifting these restrictions with learning-based techniques, for instance, visuomotor feedback policies trained with human demonstration or simulation, can greatly improve the capability of our framework. We seek to inspire the community to use nonprehensile manipulation as a stepping stone towards developing general-purpose robotic systems.

% \addtolength{\textheight}{-12cm}   % This command serves to balance the column lengths
                                  % on the last page of the document manually. It shortens
                                  % the textheight of the last page by a suitable amount.
                                  % This command does not take effect until the next page
                                  % so it should come on the page before the last. Make
                                  % sure that you do not shorten the textheight too much.

%%%%%%%%%%%%%%%%%%%%%%%%%%%%%%%%%%%%%%%%%%%%%%%%%%%%%%%%%%%%%%%%%%%%%%%%%%%%%%%%

%%%%%%%%%%%%%%%%%%%%%%%%%%%%%%%%%%%%%%%%%%%%%%%%%%%%%%%%%%%%%%%%%%%%%%%%%%%%%%%%

%%%%%%%%%%%%%%%%%%%%%%%%%%%%%%%%%%%%%%%%%%%%%%%%%%%%%%%%%%%%%%%%%%%%%%%%%%%%%%%%
\clearpage
\printbibliography
% \section*{APPENDIX}
% \subsection{Proof of Theorem~\ref{thm:max_at_boundary}}

%%%%%%%%%%%%%%%%%%%%%%%%%%%%%%%%%%%%%%%%%%%%%%%%%%%%%%%%%%%%%%%%%%%%%%%%%%%%%%%%

\end{document}